\newcommand{\R}{{\mathbb{R}}}
\newcommand{\Em}{{\mathbb{E}}}
\newcommand{\Pm}{{\mathbb{P}}}
\newcommand{\bx}{{\bm{x}}}
\newcommand{\bp}{{\bm{p}}}
\newcommand{\bz}{{\bm{z}}}
\newcommand{\bg}{{\bm{g}}}
\newcommand{\by}{{\bm{y}}}
\newcommand{\be}{{\bm{e}}}
\newcommand{\T}{{\mathsf{T}}}
\newcommand{\ber}{{\mathsf{Ber}}}
\newcommand{\gvn}{\,|\,}
\newcommand{\I}{\mathds{1}}
\newcommand{\cS}{{\cal{S}}}
\newcommand{\cD}{{\cal{D}}}
\newcommand{\sM}{{\mathscr{M}}}
\newcommand{\WS}{{W_{\cal{S}}}}
\newcommand{\simplex}{\Delta}
\newcommand{\argmax}{\mathrm{argmax}}
\newcommand{\argmin}{\mathrm{argmin}}
\newcommand{\RR}{{\rm RR}}
\crefname{equation}{Eq.}{Eqs.}
\newtheorem{defn}{Definition}
\title{Robust Loss Functions for Training Decision Trees with Noisy Labels}
\author {
    % Authors
    Jonathan Wilton,
    Nan Ye
    % Third Author Name\textsuperscript{\rm 1}
}
\begin{document}

\maketitle

\begin{abstract}
	We consider training decision trees using noisily labeled data, focusing on
	loss functions that can lead to robust learning algorithms.
	Our contributions are threefold.
	First, we offer novel theoretical insights on the robustness of many existing
	loss functions in the context of decision tree learning.
	We show that some of the losses belong to a class of what we call
	\emph{conservative losses}, 
    and the conservative losses % which?
    lead to
	an early stopping behavior during training and noise-tolerant predictions
	during testing.
	Second, we introduce a framework for constructing robust loss functions,
	called \emph{distribution losses}.
	These losses apply percentile-based penalties based on an assumed margin
	distribution, and they naturally allow adapting to different noise rates via a
	robustness parameter.
	In particular, we introduce a new loss called the \emph{negative exponential loss},
	which leads to an efficient greedy impurity-reduction learning
	algorithm.
	%which leads to an impurity that can nicely interpolate between the nonrobust
	%Gini impurity and the robust misclassification impurity.
	Lastly, our experiments on multiple datasets and noise settings validate our
	theoretical insight and the effectiveness of our adaptive negative exponential
	loss.
\end{abstract}
% additional things to possibly mention
% - polish up the sentence on the theoeretical analysis (make it readable for a general audience, and a more accurate summary of the results)
% - provide intuition on the CDF framework and the NE impurity
% - adaptiveness of the new impurity measure

\section*{Introduction}

Noisily labeled data often arise in machine learning, due to reasons such as 
the difficulty of accurately labeling data and
the use of crowd-sourcing for labeling \cite{song2022learning}.
Various approaches have been developed to handle the label noise, including 
eliminating mislabeled examples \cite{brodley1996identifying},
implicit/explicit regularization \cite{tanno2019learning,lukasik2020does},
the use of robust loss functions \cite{manwani2013noise,yang2019robust},
with recent works mostly focusing on neural networks.

This paper focuses on robust loss functions for learning decision trees from
noisily labeled data.
Tree-based methods (e.g., random forests) are among the most effective machine
learning methods, particularly on tabular data
\cite{grinsztajn2022tree,kaggle2021state},
and several robust loss functions have been shown to be effective for neural
network learning in the presence of label noise (e.g., see
\cite{ghosh2017robust,zhang2018generalized}).
However, little attention has been paid to the understanding and design of
robust loss functions in the context of decision tree learning.
This is likely because decision tree learning algorithms are often described as
greedy impurity-reduction algorithms in the literature, and it is less
well-known that the impurity-reduction algorithms are greedy algorithms for
minimizing certain losses \cite{yang2019robust,wilton2022positive}.
Our work aims to address this research gap.

Our main contributions are three-fold.
\begin{itemize}
	\item We offer novel theoretical insight on the robustness of many existing
		loss functions.
		We show that some of them belong to a class of what we call
		\emph{conservative} losses, which are robust due to an early stopping
		behavior during training and noise-tolerant predictions during testing.
	\item We introduce a framework for constructing robust loss functions, called
		\emph{distribution losses}.
		These losses apply percentile-based penalties based on an assumed margin
		distribution.
		By using different assumed margin distributions, we can recover some commonly
		used loss functions, which shed interesting insight on these existing
		functions.
		An attractive property of the distribution loss is that they naturally allow
		adapting to different noise rates via a robustness parameter. 
		Importantly, we introduce a new loss called the \emph{negative exponential loss},
		which leads to an efficient impurity-reduction learning algorithm.
		%which leads to an impurity that can nicely interpolate between the nonrobust
		%Gini impurity and the robust misclassification impurity.
		%As far as we know, previous work on decision tree specific robust loss only
		%considered binary classification, while our 
	\item Our extensive experiments validate our theoretical insight and the
		effectiveness of our adaptive negative exponential loss. 
\end{itemize}

% We make the following contributions in this paper.
% \begin{itemize}
%     \item theoretical results supporting our claims of robustness, including in finite data setting
%     \item a framework for constructing robust loss functions based on CDFs
%     \item novel class of impurity measures for robust trees using this framework
%     \item extensive experiments including multiclass classification and different label noise settings
% \end{itemize}

% noteworthy results:
% \begin{itemize}
%     \item misclassification impurity robust to high levels of label noise
%     \item robustness of new impurity (which is simple drop-in replacement for existing impurity measures) controlled with hyperparameter (adaptive to underlying noise levels), can be tuned effectively using noisy validation data
%     \item performance of new impurity roughly $\geq$ max(robust, non-robust) impurities
% \end{itemize}

The remainder of this paper is organized as follows.
We first provide a more detailed discussion on related work, 
followed by some preliminary concepts.
We then present the conservative losses and their robustness
properties.
After that, we describe our distribution-based robust loss framework
and the negative exponential loss.
Finally, we present details on experimental settings and results, with a brief conclusion.
Our source code is available at \url{https://github.com/jonathanwilton/RobustDecisionTrees}.
% \Cref{sec:related} provides a more detailed discussion on related work and 
% \Cref{sec:background} presents some preliminary concepts.
% \Cref{sec:theory} presents the conservative losses and their robustness
% properties.
% %In \Cref{sec:new-framework} we introduce a new framework for creating robust loss functions for classification, with either noisy or noise-free labels, based off cumulative distribution functions from probability theory. 
% %We then use this framework to discover a class of loss functions, which we collectively call the negative-exponential (NE) loss functions, and associated impurity measures, which uses a hyperparameter to effectively trade-off between robust and non-robust impurities and control the degree to which early stopping is applied during training.
% \Cref{sec:new-framework} describes our distribution-based robust loss framework
% and the negative exponential loss.
% \Cref{sec:experiments} presents details on experimental settings and results.
% Our source code is available at \url{https://github.com/jonathanwilton/RobustDecisionTrees}.

\section*{Related Work} \label{sec:related}

% learning with label noise 
Our work is broadly related to the large body of approaches developed for
dealing with label noise in the literature, which include filtering the noisy
labels, learning a classifier and model for the label noise simultaneously,
implicit/explicit regularization to avoid overfitting the noise,
and designing robust loss functions (see, e.g., for the excellent reviews
\cite{frenay2013classification} or \cite{song2022learning} for detailed
discussions).

% robust loss functions
The most relevant general approach to our work is the robust loss approach.
There are two common approaches to design robust losses.
One creates corrected losses by incorporating label noise rates if they are known 
(e.g., see \cite{natarajan2013learning,patrini2017making}).
%\cite{natarajan2013learning,sukhbaatar2014training,van2015machine,menon2015learning,patrini2017making}.
However, such information is typically unknown, and difficult to estimate accurately in practice.
Another approach considers inherently robust losses, which does not require
knowledge of the noise rates.
Some pioneering theoretical works consider robustness of losses against label noise 
\cite{manwani2013noise, ghosh2015making, ghosh2017robust}.
These works show that the zero-one (01) loss and mean absolute error (MAE) are
robust to many types of label noise, while the commonly used cross entropy (CE)
loss does not enjoy these same robustness properties.
This sparked interest in developing new loss functions that share favorable qualities from each of the 01, MAE and CE, for example the 
generalized cross entropy (GCE) loss \cite{zhang2018generalized}, 
negative learning \cite{kim2019nlnl}, 
symmetric cross entropy loss \cite{wang2019symmetric}, 
curriculum loss \cite{Lyu2020Curriculum} and 
normalized loss functions \cite{ma2020normalized}.
However, losses like the curriculum loss and the normalized losses are not
suitable for decision tree learning, because the impurities for these losses lack
analytical forms and efficient algorithms.

% notable is the generalized cross entropy (GCE) loss that has special cases of MAE and CE, and can be used as a simple plug in replacement for training neural nets. We want to bring something similar to DTs.

% loss functions that require label corruption rates
%Another direction for creating new losses is to incorporate label corruption rates if they are known 
%\cite{natarajan2013learning,sukhbaatar2014training,van2015machine,menon2015learning,patrini2017making}.
%However, such information is typically unknown, and difficult to estimate accurately in practice.

% decision tree learning with label noise 
Our work is also closely related to tree methods for learning from noisily
labeled data.
Motivated by the predictive performance and interpretability of tree methods
\cite{breiman1984classification,breiman2001random,geurts2006extremely}, 
various works have developed algorithms for decision tree learning in the
presence of label noise, such as pruning \cite{breiman1984classification},
making use of pseudo-examples during tree construction \cite{mantas2014credal},
leaving a large number of samples at each leaf node \cite{ghosh2017robustness}, 
and adjusting the labels at each leaf node of a trained RF \cite{zhou2019improving}.
To the best of our knowledge, the closest work to ours derives a robust impurity
from the well-known ranking loss \cite{yang2019robust}.
They only consider binary classification and their approach does not adapt to
the noise rate, while we also consider multiclass classification and our
approach adapts to the underlying noise rate.
In addition, we offer novel theoretical insight on the robustness of various
existing losses, and we contribute a general framework for constructing robust
losses.

%tree construction procedures derived from loss functions that are robust to label noise. 
%Our approach is adaptive to noise level and, in turn, gives strong performance for many label noise settings without requiring knowledge of the label corruption process. 

%While \citet{yang2019robust} considered only binary classification and a loss
%that does not adapt to the noise rate, 
%robustness of existing loss functions and the design of general robust losses that can adapt to the noise
%rate.

% In spite of tree-based methods having strong performance in many tasks,
% little attention has been given to them in the learning with noisy labels literature. Popular tree growing methods are implicitly performing recursive greedy risk minimization. In light of this, in this paper we investigate the use of robust loss functions in tree growing procedures.

\section*{Preliminaries}\label{sec:background}

\paragraph{Learning With Noisy Labels}

We consider $K$-class classification problem, where the input
$\bx\in\R^d$ and the one-hot label $\by\in\{0,1\}^K$ follows a joint
distribution $p(\bx,\by)$.
In the standard noise-free setting, we are given a training set
$\cD=\{(\bx_i,\by_i)\}_{i=1}^n$ consisting of examples independently sampled
from $p(\bx, \by)$.
In the noisy setting, we have a dataset
$\widetilde{\cD}=\{(\bx_i,\widetilde{\by}_i)\}_{i=1}^n$ 
where each noisy label $\widetilde{\by}$ is obtained by randomly flipping the
true label $\by$ with probability 
$\eta_{jk}^{\bx} := \Pm(\widetilde{\by} = \be_k\gvn \by = \be_j,\bx)$,
with $\be_{j}$ being the one-hot vector for class $j$.
We focus on class-conditional noise, where 
the noise probability is independent of the input, that is, each
$\eta_{jk}^{\bx}$ is equal to some constant $\eta_{jk}$ for all $\bx$.
In particular, we consider the special case of uniform noise, in which each
class has the same corruption rates, that is, $\eta_{jk}=1-\eta$ for $j=k$ and
$\eta_{jk}=\eta/(K-1)$ for $j\ne k$, for some constant $\eta$.

The objective is to learn a classifier $\bg:\R^d\to \simplex$ to minimize the expected risk
\[
    R(\bg) := \Em_{(\bx,\by)\sim p(\bx,\by)}\, \ell(\bg(\bx),\by),
\]
where $\ell:\R^{K}\times \R^K\to \R$ is a loss function, and  
$\simplex = \{\by\in [0,1]^K\,:\, \by^\T\bm{1}=1\}$ the standard
$(K-1)$-simplex.
Predicted labels can be obtained from the classifier with $\be_{\argmax(\bg(\bx))}$. 
With a set of noise-free training data, the risk can be estimated without bias via the empirical risk 
\[
    \widehat{R}(\bg;\cD) := \sum_{(\bx,\by)\in\cD} \ell(\bg(\bx),\by) / |\cD|.
\]
When only a noisily labeled dataset $\widetilde{\cD}$ is available, we instead
estimate the expected risk with $\widehat{R}(\bg;\widetilde{\cD})$.

We focus on loss functions that are robust against label noise.
A loss function $\ell$ is said to be noise tolerant if the minimizers of the
expected risk using loss $\ell$ on the noisy and noise-free data distributions
lead to the same expected risk using the 01 loss on noise-free data
\cite{manwani2013noise}.
If a loss function $\ell$ is symmetric, i.e., 
$\sum_{j=1}^K \ell(\bg(\bx),\be_j)=C$ for any $\bx\in\R^d$ and any $\bg$,
% \begin{equation}
%     \sum_{j=1}^K \ell(\bg(\bx),\be_j)=C,\quad \forall\bx\in\R^d,\,\forall \bg, \label{eq:symm}
% \end{equation}
then, under uniform label noise with $\eta<\frac{K-1}{K}$, $\ell$ is noise tolerant \cite{ghosh2015making,ghosh2017robust}. 
If we additionally have 
$R(\bg^*)=0$ for some classifier $\bg^{*}$, 
$0\leq \ell(\bg(\bx),\be_j)\leq C/(K-1)\,\forall j=1,\ldots,K$ 
and the matrix 
$(\eta_{ij})_{i,j=1}^K$ 
is diagonally dominant, then $\ell$ is noise tolerant under class conditional noise \cite{ghosh2017robust}.
Examples of symmetric loss functions include the $01$ loss $\ell(\widehat{\by},\be_j)=\I(\widehat{\by}\ne \be_j)$ 
and MAE loss $\ell(\widehat{\by},\be_j) = \|\widehat{\by}-\be_j\|_1$. 
On the other hand, the mean squared error (MSE) loss
$\ell(\widehat{\by},\be_j)=\|\widehat{\by}-\be_j\|_2^2$ and CE
loss $\ell(\widehat{\by},\be_j) = -\be_j^\T\log \widehat{\by}$ are not
symmetric.
In practice it has been shown that training neural network (NN) classifiers with these noise tolerant loss functions can lead to significantly longer training time before convergence \cite{zhang2018generalized}. 
The GCE loss $\ell(\widehat{\by},\be_j)=(1-(\be_j^\T
\widehat{\by})^q)/q$ was proposed as a compromise between noise-robustness and
good performance \cite{zhang2018generalized}. The hyperparameter $q\in[0,1]$
controls the robustness, with special cases $q=0$ giving the CE and $q=1$ the
MAE. 

\paragraph{Decision Tree Learning} 
We briefly review two dual perspectives for decision tree learning:
learning by impurity reduction, and learning by recursive greedy risk minimization.

In the impurity reduction perspective, the decision tree construction process
recursively partitions the (possibly noisy) training set $\cD$ such that each
subset has similar labels.
We start with a single node associated with the entire training set.
Each time we have a node associated with a subset $\cS \subseteq \cD$ that we
need to split, we find an optimal split $(f,t)$ that partitions $\cS$ into 
$\cS_{f \leq t}$ and $\cS_{f > t}$ based on whether the feature $f$ is larger
than the value $t$.
The quality of a split is usually measured by its reduction in some label
impurity measure $I$, defined by:
\[
    \frac{|\cS|}{|\cD|}I(\cS) - \frac{|\cS_{f\leq t}|}{|\cD|}I(\cS_{f\leq t}) - \frac{|\cS_{f> t}|}{|\cD|}I(\cS_{f> t}).
\]
Based on the split, two child nodes associated with $\cS_{f \leq t}$ and 
$\cS_{f > t}$ are created. 
This process is repeated recursively on the two child nodes until some stopping
criteria is satisfied. 

% Generalisation to multiclass classification of framework laid out in PUET 
In the recursive greedy risk minimization perspective, a node is split in a greedy way to minimize 
the empirical risk $\widehat{R}(\bg;\cD)$. 
If $\bg$ predicts a constant $\widehat{\by}$ on a subset $\cS\subseteq \cD$ of the training examples, then the contribution to the empirical risk is the partial empirical risk 
$
% \begin{align*}
    \widehat{R}(\widehat{\by};\cS):=\sum_{(\bx,\by)\in\cS}  \ell(\widehat{\by},\by)/|\cD|.
% \end{align*}
$
Denote by 
$\widehat{\by}^*:=\argmin_{\widehat{\by}\in\simplex}\widehat{R}(\widehat{\by};\cS)$ the optimal constant probability vector prediction, with minimum partial empirical risk 
% $
\begin{align*}
    \widehat{R}^*(\cS):=\widehat{R}(\widehat{\by}^*;\cS).
\end{align*}
% $
The minimum partial empirical risk can be interpreted as an impurity measure.
In fact, it is the Gini impurity and the entropy impurity (up
to a multiplicative constant) when the loss is the MSE loss and the CE loss,
respectively.

If we switch from a constant prediction rule to a decision stump that splits on feature $f$ at threshold $t$, then the minimum partial empirical risk for the decision stump is $\widehat{R}^*(\cS_{f\leq t})+\widehat{R}^*(\cS_{f> t})$, and the risk reduction for the split $(f,t)$ is 
\begin{equation}
    \RR(f,t;\cS) := \widehat{R}^*(\cS) - \widehat{R}^*(\cS_{f\leq t}) - \widehat{R}^*(\cS_{f> t}).
\end{equation}
% Some works have identified the connection between loss functions and impurity measures for decision trees, for example, binary classification \cite{wilton2022positive, painsky2018universality}, multiclass classification with MSE/Gini and (kind of) 01/misclassification \cite{breiman1984classification}. 
% However, these results are for binary classification problems and, \textcolor{red}{to the best of our knowledge}, no such links have not been officially established for the multiclass classification setting. 

We will use a subscript to specify the loss if needed. 
For example, $\widehat{R}_{\mathrm{MSE}}^*(\cS)$ and
$\RR_{\mathrm{MSE}}(f,t;\cS)$ indicates that the MSE loss is used for computing
the minimum partial empirical risk and the risk reduction for a split $(f, t)$
on $\cS$, respectively.

The equivalence between loss functions and impurity measures has been explored
in, for example, 
%\cite{breiman1984classification, yang2019robust, wilton2022positive}.
\cite{yang2019robust, wilton2022positive}.
To illustrate, let $\bp$ be the empirical class distribution for $\cS$. 
Then the minimum partial risks for MSE and CE are the commonly used Gini
impurity $1-\|\bp\|_2^2$ and entropy impurity $-\bp^\T\log \bp$, respectively, up
to a multiplicative constant (see (a) and (b) of \Cref{thm:loss_impurity_equivalence}). 
Note that not all loss functions have impurities which have analytical forms and
efficient algorithms, while our negative exponential loss yields an 
impurity which has an efficiently computable analytical formula, which is 
important for efficient decision tree learning.
%See, for example, the implementation of decision tree classifier in scikit-learn \cite{pedregosa2011scikit}.

For prediction, each test example is assigned to a leaf node based on its
feature values, then labeled according to the majority label of the examples at
the leaf node. 
Generalization performance of the decision tree classifier can be measured by
comparing predictions on unseen data with true labels, however, it can be
heavily affected when training data, particularly labels, are unreliable. 

\section*{Conservative Losses}
\label{sec:theory}

We first examine the impurities corresponding to various loss functions,
including both standard loss functions and robust ones, in
\Cref{thm:loss_impurity_equivalence}.
Parts $(a)$, $(b)$ and $(c)$ are shown in previous works
\cite{breiman1984classification, painsky2018universality,yang2019robust,
wilton2022positive}
but included for completeness.
All proofs are given in the appendices.
\footnote{The appendices are available in the full version of the paper at
\url{https://github.com/jonathanwilton/RobustDecisionTrees}.}
Unless otherwise stated, we shall use $\cD$ to denote an arbitrary (possibly
noisy) set of input-output pairs, and $\cS$ a subset of $\cD$, in this section.

\begin{restatable}{theorem}{lie}
\label{thm:loss_impurity_equivalence}
    Let 
    $W_{\cS}=|\cS|/|\cD|$
    and 
    $\bp
    =(p_1,\ldots,p_K)^\T
    \in \simplex$ 
    be the empirical class probability vector for $\cS$, that is, 
    $p_j=\sum_{(\bx,\by)\in\cS}\I(\by=\be_j)/|\cS|,\forall j=1,\ldots,K$. 
    Then,

    \begin{enumerate}
        \item[(a)] $\widehat{R}_{\mathrm{MSE}}^*(\cS)=W_{\cS}(1-\|\bp\|_2^2)$,\label{result a}
        \item[(b)] $\widehat{R}_{\mathrm{CE}}^*(\cS)=W_{\cS}(-\bp^\T\log\bp)$,
        \item[(c)] $\widehat{R}_{\mathrm{01}}^*(\cS)=W_{\cS}(1-\|\bp\|_\infty)$,
        \item[(d)] $\widehat{R}_{\mathrm{GCE}}^*(\cS)
            =
            \begin{cases}
                W_{\cS}(-\bp^\T\log\bp), & q=0, \\
                W_{\cS}(1-\|\bp\|_{1/(1-q)})/q, & \forall q\in(0,1), \\
                W_{\cS}(1-\|\bp\|_\infty)/q, & q\geq 1,                 
            \end{cases}$
        \item[(e)] $\widehat{R}_{\mathrm{MAE}}^*(\cS)=2W_{\cS}(1-\|\bp\|_\infty)$.
    \end{enumerate}
\end{restatable}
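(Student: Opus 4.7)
For every case the starting point is the observation that, since the loss depends on the label only through which one-hot vector $\by$ equals, the partial empirical risk factorises as
\begin{equation*}
\widehat{R}(\widehat{\by};\cS) \;=\; \frac{1}{|\cD|}\sum_{(\bx,\by)\in\cS}\ell(\widehat{\by},\by) \;=\; W_{\cS}\sum_{j=1}^{K} p_j\,\ell(\widehat{\by},\be_j).
\end{equation*}
So the whole theorem reduces to minimising $F(\widehat{\by}) := \sum_{j} p_j \ell(\widehat{\by},\be_j)$ over $\widehat{\by}\in\simplex$ and reading off the optimum value.

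For (a) I would expand $\|\widehat{\by}-\be_j\|_2^2$ to get $F(\widehat{\by})=\|\widehat{\by}\|_2^2-2\bp^\T\widehat{\by}+1$, which is a strictly convex quadratic on the simplex. Setting the gradient of the Lagrangian to zero immediately yields $\widehat{\by}^*=\bp$ (which lies in $\simplex$), and plugging back gives $1-\|\bp\|_2^2$. For (b), $F(\widehat{\by})=-\sum_j p_j\log\widehat{y}_j$; Lagrange multipliers against $\sum_j\widehat{y}_j=1$ give $\widehat{y}_j^*=p_j$ and the value $-\bp^\T\log\bp$. For (c), note $\ell_{01}(\widehat{\by},\be_j)=\I(\widehat{\by}\ne\be_j)$ equals $1$ for every $j$ whenever $\widehat{\by}$ is not a vertex of $\simplex$, giving $F=1$; when $\widehat{\by}=\be_k$ we have $F=1-p_k$, so the minimum is $1-\|\bp\|_\infty$, attained at $\be_{\argmax\bp}$. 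For (e), expanding $\|\widehat{\by}-\be_j\|_1=(1-\widehat{y}_j)+\sum_{k\ne j}\widehat{y}_k=2(1-\widehat{y}_j)$ yields $F(\widehat{\by})=2(1-\bp^\T\widehat{\by})$, whose minimum over the simplex is $2(1-\|\bp\|_\infty)$ at $\widehat{\by}=\be_{\argmax\bp}$.

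The main obstacle is part (d), and specifically the regime $q\in(0,1)$. Here $F(\widehat{\by})=\bigl(1-\sum_j p_j \widehat{y}_j^{\,q}\bigr)/q$, so we must maximise $G(\widehat{\by})=\sum_j p_j \widehat{y}_j^{\,q}$ on $\simplex$. Since $x\mapsto x^q$ is strictly concave, $G$ is strictly concave and interior critical points are global maxima; Lagrange multipliers give $p_j q\,\widehat{y}_j^{\,q-1}=\lambda$, i.e.\ $\widehat{y}_j^{\,*}\propto p_j^{1/(1-q)}$. Normalising and substituting back requires the identity $1+q/(1-q)=1/(1-q)$ to collapse the exponents; after this simplification $G(\widehat{\by}^*)=\|\bp\|_{1/(1-q)}$, which yields the claim. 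For $q=0$ one either takes the limit $q\downarrow 0$ of this expression (using $\|\bp\|_{1/(1-q)}\to\|\bp\|_1=1$ with L'H\^opital on the outer $(1-\cdot)/q$) or appeals directly to (b). For $q\ge 1$, the function $x\mapsto x^q$ is convex, so $G$ is convex on $\simplex$ and its maximum is attained at an extreme point $\be_k$; then $G(\be_k)=p_k$ and the maximum over $k$ is $\|\bp\|_\infty$, giving the stated $(1-\|\bp\|_\infty)/q$.

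Throughout, each optimisation is a routine application of Lagrange multipliers or convexity/concavity arguments on the simplex, and the only piece that requires genuine algebra is the exponent-manipulation step in case (d) for $q\in(0,1)$ that produces the $\ell_{1/(1-q)}$ norm.
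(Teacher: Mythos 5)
Your proposal is correct and follows essentially the same route as the paper: reduce each case to minimising $\sum_{j} p_j\,\ell(\widehat{\by},\be_j)$ over the simplex, then solve by Lagrange multipliers, case analysis, or a direct inequality, exactly as in the paper's appendix. The only divergence is in part (d) for $q\in(0,1)$, where the paper applies H\"older's inequality to bound $\sum_j p_j\widehat{y}_j^{\,q}$ by $\|\bp\|_{1/(1-q)}$ (with the equality condition identifying the optimiser), while you locate the same maximiser $\widehat{y}_j^{*}\propto p_j^{1/(1-q)}$ via concavity and stationarity; the two arguments are interchangeable and yield identical optimisers and values (and your extreme-point argument for $q\ge 1$ likewise matches the paper's $\widehat{y}_j^{\,q}\le\widehat{y}_j$ bound).
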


The result highlights an interesting observation on the impurities of 
two robust losses, the MAE loss and the 01 loss \cite{ghosh2017robust}:
both lead to the misclassification impurity 
$1 - \|\bp\|_{\infty}$, up to a multiplicative constant.
Furthermore, the GCE is equivalent to an impurity measure that interpolates
between the misclassification and entropy impurities depending on the chosen
value of $q$. 
This is consistent with the fact that GCE interpolates between CE and MAE loss
for $q \in (0, 1)$ \cite{zhang2018generalized}.

Below, we introduce a broad class of losses that lead to the misclassification
impurity, and provide a few results to justify their robustness properties in
the context of decision tree learning with label noise.
\begin{defn} \label{defn:conservative}
A loss function $\ell$ is called \emph{$C$-conservative} if,
for some constant $C>0$,
it satisfies the following properties:
\begin{enumerate}
	\item[(a)] $\sum_{j=1}^K\ell(\widehat{\by},\be_j) \geq C(K-1),\ 
		\forall\widehat{\by}\in \simplex$,
	\item[(b)] $\ell(\widehat{\by},\be_j)\leq C,\ 
		\forall\widehat{\by}\in \simplex,\ \forall j=1,\ldots,K$, and
	\item[(c)] $\ell(\be_j,\be_j)=0,\ \forall j=1,\ldots,K$.
\end{enumerate}
\end{defn}
Intuitively, (a) and (b) implies that the loss assigned to a single class is
never too much as compared to the total loss assigned to all classes.
%TODO: discuss the relationship between conservative losses and symmetric losses
%class of loss functions that are both symmetric and L(e_j,e_j)=0 is a subset of conservative loss functions.

\begin{restatable}{theorem}{universal}
\label{thm:universal} 
    In a $K$ class classification problem let $\ell:\R^K\times \R^K\to [0,\infty)$ be a loss function, 
    % $\be_k\in\{0,1\}^K$ be a one-hot vector with 1 in position $k$, 
    and
		$\bp\in\simplex$ be the vector of proportions of examples in $\cS$ from each class. 
		Then, we have 
    \begin{align}
        \widehat{R}^*(\cS) &= C\WS (1-\|\bp\|_\infty) \label{eq:misclassification_impurity}
        % \underset{\widehat{\by}\in \Delta^{K-1}}{\min}\ \sum_j p_k\ell(\widehat{\by},\be_j) &= C(1-\|\bp\|_\infty)
        % \\
        % &= C\sum_{k\ne \argmax(\bp)} p_k
        % \\
        % &= \sum_{k\ne \argmax(\bp)} Cp_k + 0p_{\argmax(\bp)}
    \end{align}
    if $\ell$ is $C$-conservative.
    In addition, if \Cref{eq:misclassification_impurity} holds, then $\ell$ satisfies (a)
    in \Cref{defn:conservative}.
\end{restatable}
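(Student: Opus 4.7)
My plan is to reduce the claim to computing the minimum of $L(\widehat{\by}) := \sum_{j=1}^{K} p_j\,\ell(\widehat{\by},\be_j)$ over $\widehat{\by}\in\simplex$. Grouping the examples of $\cS$ by their true label gives $\widehat{R}(\widehat{\by};\cS) = W_{\cS} L(\widehat{\by})$, so it suffices to prove $\min_{\widehat{\by}\in\simplex} L(\widehat{\by}) = C(1-\|\bp\|_\infty)$. For the upper bound, I let $j^* \in \argmax_j p_j$ and evaluate at $\widehat{\by}=\be_{j^*}$: property (c) of \Cref{defn:conservative} kills the $j=j^*$ term, while property (b) bounds each remaining term by $C p_j$, yielding $L(\be_{j^*}) \le C(1-p_{j^*}) = C(1-\|\bp\|_\infty)$.

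The matching lower bound is the main technical step, and the difficulty is to combine the aggregate constraint (a) with the pointwise bound (b) into a bound on a $\bp$-weighted sum. Writing $a_j := \ell(\widehat{\by},\be_j)$, I plan to exploit the identity
\[
    L(\widehat{\by}) \;=\; p_{j^*}\sum_{j=1}^{K} a_j \;-\; \sum_{j=1}^{K} (p_{j^*}-p_j)\, a_j,
\]
which holds by direct expansion. Property (a) lower-bounds the first sum by $C(K-1)p_{j^*}$ (using $p_{j^*}\ge 0$), while property (b) combined with $p_{j^*}-p_j \ge 0$ upper-bounds the second sum by $C(Kp_{j^*}-1)$ (this quantity is nonnegative because $p_{j^*}\ge 1/K$). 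Subtracting gives $L(\widehat{\by}) \ge C(1-p_{j^*})$ for every $\widehat{\by}\in\simplex$, matching the upper bound and completing the first direction.

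For the converse, I observe that if the impurity formula holds for arbitrary $\cS$, then specializing to an $\cS$ whose empirical class distribution is uniform, $p_j=1/K$ for all $j$, reduces the identity to $\min_{\widehat{\by}\in\simplex}(1/K)\sum_{j=1}^K \ell(\widehat{\by},\be_j) = C(K-1)/K$, which rearranges to $\sum_{j=1}^K \ell(\widehat{\by},\be_j) \ge C(K-1)$ for all $\widehat{\by}\in\simplex$. This is exactly property (a) of \Cref{defn:conservative}. I expect the algebraic rewriting in the lower bound to be the only nontrivial step; once that identity is in place, the remainder is bookkeeping on the constraints.
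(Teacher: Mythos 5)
Your proposal is correct and takes essentially the same route as the paper: the identity $\sum_j p_j a_j = p_{j^*}\sum_j a_j - \sum_j (p_{j^*}-p_j)a_j$ is an inlined, forward-direction version of the paper's \Cref{lemma:earth-moving}, and both the evaluation at $\be_{j^*}$ for the upper bound and the uniform-distribution specialization for the converse mirror the paper's argument. The only cosmetic difference is that the paper pins down $\ell(\be_{j^*},\be_j)=C$ exactly for $j\ne j^*$ to compute the attained value, whereas you only need the inequality from property (b) together with your matching lower bound.
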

This theorem provides a convenient way to check if a loss function leads to the
misclassification impurity, as illustrated in \Cref{cor:robust_losses}. 

\begin{restatable}[]{corollary}{rl}
\label{cor:robust_losses} 

    The MAE, 01 loss, GCE $(q\geq 1)$ and infinity norm loss satisfy
    % \[
    %     \underset{\widehat{\by}\in\cP}{\min}\ \frac{1}{n}\sum_{i=1}^n \ell(\widehat{\by},\by_i) = \underset{\widehat{\by}\in\cP}{\min}\ \sum_j^K p_j\ell(\widehat{\by},\be_j) = C(1-\|\bp\|_\infty), 
    % \]
    $
        \widehat{R}^*(\cS) = W_{\cS}\, C(1-\|\bp\|_\infty),
    $
    with $C=2,\,1,\,1/q$ and $1$, respectively. 
% \end{corollary}
\end{restatable}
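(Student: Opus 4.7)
The plan is to apply \Cref{thm:universal} to each of the four losses: it suffices to show that each is $C$-conservative with the specified $C$, after which the identity $\widehat{R}^*(\cS) = W_\cS\,C(1-\|\bp\|_\infty)$ follows directly from the theorem. This reduces the task to verifying conditions (a)--(c) of \Cref{defn:conservative} case by case.

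Conditions (b) and (c) are immediate in all four cases. For (c), each loss vanishes at matching one-hot inputs. For (b), on $\widehat{\by}\in\simplex$ one reads off $\|\widehat{\by}-\be_j\|_1 = 2(1-\widehat{y}_j)\le 2$, $\I(\widehat{\by}\ne\be_j)\le 1$, $(1-\widehat{y}_j^q)/q \le 1/q$, and $\|\widehat{\by}-\be_j\|_\infty \le 1$ (since every coordinate of $\widehat{\by}-\be_j$ lies in $[-1,1]$), matching $C = 2,\,1,\,1/q,\,1$ respectively.

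The substantive step is (a), the inequality $\sum_{j=1}^K\ell(\widehat{\by},\be_j) \ge C(K-1)$ for all $\widehat{\by}\in\simplex$. For MAE this is an exact identity: $\sum_j 2(1-\widehat{y}_j) = 2(K-1)$. For the 01 loss, at most one of the $K$ indicators can vanish, so the sum is at least $K-1$. For GCE with $q\ge 1$, I would use $\widehat{y}_j\in[0,1]$ to conclude $\widehat{y}_j^q \le \widehat{y}_j$, whence $\sum_j\widehat{y}_j^q\le 1$ and therefore $\sum_j(1-\widehat{y}_j^q)/q \ge (K-1)/q$. For the infinity norm loss, I would lower-bound each $\|\widehat{\by}-\be_j\|_\infty$ by the absolute value of its $j$-th coordinate, namely $|\widehat{y}_j-1| = 1-\widehat{y}_j$, so the sum is at least $\sum_j(1-\widehat{y}_j) = K-1$. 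Once (a)--(c) are in hand, \Cref{thm:universal} yields the claimed identity for each loss. The only mildly nontrivial step in the whole argument is the single-coordinate lower bound used for the infinity norm case; every other verification is a direct algebraic check.
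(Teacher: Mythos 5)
Your proposal is correct and follows essentially the same route as the paper's proof: verify conditions (a)--(c) of \Cref{defn:conservative} for each of the four losses (with the same algebraic identities, e.g.\ $\|\widehat{\by}-\be_j\|_1=2(1-\widehat{y}_j)$, $\widehat{y}_j^q\le\widehat{y}_j$ for $q\ge1$, and the coordinatewise lower bound $\|\widehat{\by}-\be_j\|_\infty\ge 1-\widehat{y}_j$) and then invoke \Cref{thm:universal}. No gaps.
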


Our first robustness property of the conservative loss is concerned with the
optimal predictions.
%Firstly, the misclassification impurity only depends on the proportion of samples in the most prevalent class, rather than all classes like for the commonly used Gini and entropy impurities. This suggests that the impurity measure is less likely to differ significantly when data at a node has errors in the labels compared to other methods. 
%Moreover, the optimal constant probability vector prediction at a node is 
%$\widehat{\by}^* = \be_{\argmax(\bp)}$ 
%when using conservative loss functions, whereas we have 
%$\widehat{\by}^* = \bp$
%for loss functions such as MSE and CE.

% \textcolor{red}{\Cref{cor:one-hot-predictions} was originally included to formalise this point, but I think it may be obvious by simply looking at the form of the different impurity measures.}
% \begin{corollary}
\begin{restatable}[]{theorem}{ohp}
\label{cor:one-hot-predictions}
    Assume $\ell$ is a conservative loss function. Then,
    \[\underset{\widehat{\by}\in \simplex}{\argmin}\,\widehat{R}_{}(\widehat{\by};\cS) = \be_{\argmax(\bp)}.\]
    Moreover, for non-conservative loss functions MSE and CE,
    \[
        \underset{\widehat{\by} \in \simplex}{\argmin}\,\widehat{R}(\widehat{\by};\cS) = \bp.
    \]
\end{restatable}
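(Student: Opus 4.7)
The plan is to rewrite the partial empirical risk in terms of the empirical class distribution $\bp$ and then treat the conservative case and the MSE/CE cases separately. First I will group examples in $\cS$ by their label to obtain
\[
    \widehat{R}(\widehat{\by};\cS) = W_{\cS}\sum_{j=1}^K p_j\,\ell(\widehat{\by},\be_j),
\]
so minimizing $\widehat{R}(\cdot;\cS)$ over $\simplex$ reduces to minimizing the weighted sum $\sum_j p_j\,\ell(\widehat{\by},\be_j)$.

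For the conservative case, let $k^*\in \argmax_j p_j$, so that $\|\bp\|_\infty = p_{k^*}$. I would substitute $\widehat{\by}=\be_{k^*}$ into the weighted sum: property (c) of \Cref{defn:conservative} zeros out the $j=k^*$ term, and property (b) bounds each remaining $\ell(\be_{k^*},\be_j)$ by $C$, giving
\[
    \widehat{R}(\be_{k^*};\cS) \le C\, W_{\cS}\sum_{j\ne k^*} p_j = C\, W_{\cS}(1-\|\bp\|_\infty).
\]
By \Cref{thm:universal}, the right-hand side equals $\widehat{R}^*(\cS)$, so the inequality must be an equality and $\be_{k^*}$ attains the minimum.

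For MSE, expanding $\|\widehat{\by}-\be_j\|_2^2$ and using $\sum_j p_j=1$ gives
\[
    \sum_j p_j \|\widehat{\by}-\be_j\|_2^2 = \|\widehat{\by}-\bp\|_2^2 + 1-\|\bp\|_2^2,
\]
which is strictly convex in $\widehat{\by}$ and uniquely minimized at $\widehat{\by}=\bp\in\simplex$. For CE, the weighted sum reduces to $-\sum_j p_j\log \hat{y}_j$, which I would recognize as the cross entropy $H(\bp,\widehat{\by})=H(\bp)+D_{\mathrm{KL}}(\bp\,\|\,\widehat{\by})$; the non-negativity of KL divergence (Gibbs' inequality) then yields $\widehat{\by}=\bp$ as the unique minimizer over $\simplex$.

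The main subtlety lies in the conservative case: property (b) on its own only yields the one-sided inequality $\widehat{R}(\be_{k^*};\cS)\le C W_{\cS}(1-\|\bp\|_\infty)$, which is not a priori tight. The proof closes the loop by invoking \Cref{thm:universal} as a matching lower bound to force equality. Uniqueness of the minimizer in that case can genuinely fail when $\bp$ has a tie for its maximum entry (e.g.\ MAE with a balanced $\bp$ renders $\widehat{R}(\cdot;\cS)$ constant on $\simplex$), so the statement is to be read as asserting that $\be_{\argmax(\bp)}$ is a (not necessarily unique) minimizer, consistent with any fixed tie-breaking convention for $\argmax$.
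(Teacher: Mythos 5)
Your proposal is correct and follows essentially the same route as the paper: the conservative case is handled by combining the lower bound $\widehat{R}^*(\cS)=C\WS(1-\|\bp\|_\infty)$ from \Cref{thm:universal} with the observation that $\be_{\argmax(\bp)}$ attains it (via properties (b) and (c)), and the MSE/CE claims restate what is established in the proofs of \Cref{thm:loss_impurity_equivalence}(a) and (b), with your completion-of-squares and Gibbs'-inequality arguments being standard equivalents of the paper's gradient and Lagrange-multiplier computations. Your closing remark about non-uniqueness of the minimizer under ties in $\bp$ is a valid caveat that the paper itself glosses over.
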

This result suggests that the optimal constant prediction at each node is less
likely to change after label corruption when the loss functions are conservative
versus not.
This is because for a conservative loss, the optimal constant prediction is the
one-hot vector for the most likely class, which is likely to remain the same
after label noise is added.

Our second robustness result, \Cref{thm:hoeffding}, provides a more precise
statement on the effect of noise on the majority class.
Specifically, we show that a sample size of $O(1/\gamma^{2})$ is needed to
guarantee that the majority class remains the same under the label noise, where
$\gamma$ is a margin parameter that depends on the noise and the class
distribution of the clean dataset, as defined in the theorem below.

\begin{restatable}[]{theorem}{hoeffding} \label{thm:hoeffding}
	%Let $\cS = \{(\bx_{1}, \by_{1}), \ldots, (\bx_{n}, \by_{n})\}$ be a set of $n$
	Let $\cS$ be a set of $n$ noise-free examples, $p_{k}$ be the empirical
	probability of class $k$ in $\cS$, and $k^*$ the most prevalent class in
	$\cS$, that is, $p_{k^*}>p_{k}$ for any $k\ne k^*$.
	In addition, let $\widetilde{\cS}$ be obtained from $\cS$ by applying a
	uniform noise with rate $\eta < (K-1)/K$, and $\widetilde{p}_k$ the empirical
	probability of class $k$ in $\widetilde{\cS}$. 
	Then
	\begin{align*}
		\Pm(\widetilde{p}_{k^*}\geq \widetilde{p}_{k} \text{ for all $k \neq k^{*}$}
		\gvn \cS) 
		&\geq 1-(K-1) e^{-n\gamma^2/2},
	\end{align*}
	with 
	$\gamma = \underset{k\ne k^*}{\min}\,(\widetilde{p}_{k^*}^{\eta} - \widetilde{p}_{k}^{\eta})$,
	and 
	$\widetilde{p}_{j}^{\eta}=\Em [\widetilde{p}_j\gvn \cS],\ \forall\,j=1,\ldots,K$.
\end{restatable}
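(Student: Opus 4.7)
The plan is to reduce the bad event to a union bound over the $K-1$ pairwise comparisons between class $k^*$ and each other class $k$, and then control each pairwise comparison via Hoeffding's inequality applied conditionally on $\cS$. First I would write, by the union bound,
\[
\Pm\left(\exists\, k\ne k^*: \widetilde{p}_{k^*} < \widetilde{p}_k \gvn \cS\right) \leq \sum_{k\ne k^*} \Pm\left(\widetilde{p}_{k^*} < \widetilde{p}_k \gvn \cS\right),
\]
so it suffices to show each summand is at most $e^{-n\gamma^2/2}$.

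For a fixed $k\ne k^*$, I would express the pairwise gap as a sample mean: defining $X_i = \I(\widetilde{\by}_i = \be_{k^*}) - \I(\widetilde{\by}_i = \be_k) \in \{-1,0,1\}$ for $i=1,\ldots,n$, we have $\widetilde{p}_{k^*} - \widetilde{p}_k = n^{-1}\sum_i X_i$. Conditional on $\cS$, the noisy labels are drawn independently across examples under the class-conditional flipping rule, so the $X_i$ are conditionally independent with range at most $2$, and by construction $\Em[n^{-1}\sum_i X_i \gvn \cS] = \widetilde{p}_{k^*}^\eta - \widetilde{p}_k^\eta$. Hoeffding's inequality for independent bounded variables then gives
\[
\Pm\left(\widetilde{p}_{k^*} - \widetilde{p}_k \leq (\widetilde{p}_{k^*}^\eta - \widetilde{p}_k^\eta) - t \gvn \cS\right) \leq \exp\!\left(-\frac{2(nt)^2}{4n}\right) = e^{-nt^2/2}.
\]
Choosing $t = \widetilde{p}_{k^*}^\eta - \widetilde{p}_k^\eta$ turns the event into $\{\widetilde{p}_{k^*} \leq \widetilde{p}_k\}$, and because this quantity is at least $\gamma$ by the definition of $\gamma$, each summand is bounded by $e^{-n\gamma^2/2}$; summing over the $K-1$ indices finishes the argument.

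The main obstacle is conceptual rather than computational: I would want to verify that under $\eta < (K-1)/K$ together with $p_{k^*} > p_k$, the noisy marginal gap $\widetilde{p}_{k^*}^\eta - \widetilde{p}_k^\eta$ is strictly positive, so that $\gamma > 0$ is genuinely meaningful. Writing the uniform-noise expectation as $\widetilde{p}_j^\eta = \frac{\eta}{K-1} + \frac{K-1-K\eta}{K-1}\,p_j$ makes the order-preservation from $\bp$ to $\widetilde{\bp}^\eta$ transparent exactly when the coefficient of $p_j$ is positive, which is precisely the stated noise-rate assumption. The other subtlety to track carefully is that conditioning on $\cS$ freezes the clean labels, while the noisy labels form the independent random source that drives Hoeffding; getting this framing right is the whole of the proof, after which the bound follows mechanically.
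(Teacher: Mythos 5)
Your proposal is correct and follows essentially the same route as the paper's proof: a union bound over the $K-1$ pairwise comparisons, followed by Hoeffding's inequality applied conditionally on $\cS$ to the average of the bounded differences of indicators, with the deviation parameter set to the expected noisy gap $\widetilde{p}_{k^*}^{\eta}-\widetilde{p}_{k}^{\eta}\geq\gamma$. Your explicit check that $\widetilde{p}_{j}^{\eta}$ is an order-preserving affine function of $p_{j}$ when $\eta<(K-1)/K$ is exactly the computation the paper performs to guarantee $\gamma>0$.
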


%TODO: connect to robustness analysis in previous papers
%\cite{zhou2019improving,ghosh2017robustness} 

%TODO: a more interesting question: given a random $\cS$ and a randomly
%corrupted $\widetilde{p}$, what's the probability that both share the same most
%likely class?

%TODO: Does a similar result hold for more general assumptions on the noise?

%TODO (add back this sentence?): In our experiments we observe that the misclassification impurity also offers robustness to class conditional noise.

%This result says that, under uniform noise, the lower bound on the probability that the majority class for the noisy and nose-free data remains the same increases with the margin $\gamma$. 
%Using \Cref{thm:hoeffding}, we can assure the lower bound is larger than some probability $\alpha\in[0,1)$ if
%\begin{align*}
    %% 1-(K-1)\exp\left(
    %%     -n\gamma^2/2
    %%     \right)\geq \alpha
    %%     \quad  \Longleftrightarrow \quad 
        %\gamma \geq \frac{1}{\sqrt{n}}\sqrt{2\log\left(\frac{K-1}{1-\alpha}\right)}.
%\end{align*}
%For example, if $\alpha=0.95$ and $K=2$, then we require $\gamma \geq 2.4477/\sqrt{n}$. 

Our third robustness result, \Cref{thm:early_stopping}, shows that a
conservative loss leads to an early stopping property that is robust against
label noise, while a non-conservative loss generally does not have this
property and stops under a much more stringent condition.

\begin{restatable}{theorem}{es}
\label{thm:early_stopping}
    % Let $\bm{n}_{\cS}\in \mathbb{Z}_{\geq 0}^K\setminus\{\bm{0}\}$ be the vector containing the number of examples in $\cS$ from each class, 
    % and denote by 
    % $\cM_{\cS} = \{j\,:\, n_j=\|\bn_{\cS}\|_\infty\}$ 
    % the set of majority classes at $\cS$.
    % Assume tree growth is halted if 
    % $\RR(f,t;\cS)\leq 0$.
    % Then, using a conservative loss function, 
    % a node will not be split according to this criterion if and only if 
    % % $\|\bm{n}_{\cS}\|_\infty = \|\bm{n}_{\cS_{f\leq t}}\|_\infty + \|\bm{n}_{\cS_{f > t}}\|_\infty$.
    % $\cM_{\cS}=\cM_{\cS_{f\leq t}} \cap \cM_{\cS_{f > t}}$,
    % that is, the majority classes at the parent node must also be majority classes at both child nodes.  
    Assume that tree growth at a node is halted when the risk reduction at the node for any split is non-positive.
    \begin{itemize}
    \item[(a)] For a conservative loss, a node will stop splitting if and only if the majority classes at the node are also the majority classes at both child nodes for all splits.  
    \item[(b)] For the MSE, CE or GCE $(q\in(0,1))$ loss, splitting is only halted if the parent node and both child nodes all share the same label distribution for all splits.
    \end{itemize}
\end{restatable}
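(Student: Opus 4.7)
My approach is to express the risk reduction $\RR(f,t;\cS) = \widehat{R}^{*}(\cS) - \widehat{R}^{*}(\cS_{f\le t}) - \widehat{R}^{*}(\cS_{f>t})$ using the impurity formulas from \Cref{thm:loss_impurity_equivalence} and \Cref{thm:universal}, show that $\RR \ge 0$ for every split in both cases, and read off when equality is attained. Since halting requires $\RR \le 0$ for every split and we will always have $\RR \ge 0$, the halting condition reduces to $\RR = 0$ for every split. Throughout I write $\bp_L, \bp_R$ for the empirical class probability vectors and $W_L, W_R$ for the mass weights of the two children, so $W_L + W_R = \WS$ and $\WS\bp = W_L\bp_L + W_R\bp_R$.

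\noindent\textbf{Part (a).} For a $C$-conservative loss, \Cref{thm:universal} gives $\widehat{R}^{*}(\cS) = C\WS(1-\|\bp\|_\infty)$, so after cancelling the $W_L + W_R = \WS$ terms,
\[
\RR(f,t;\cS) \;=\; C\bigl(W_L\|\bp_L\|_\infty + W_R\|\bp_R\|_\infty - \WS\|\bp\|_\infty\bigr).
\]
Fixing any majority class $k^{*}$ of $\cS$, the identity $\WS\|\bp\|_\infty = \WS p_{k^{*}} = W_L p_{k^{*},L} + W_R p_{k^{*},R}$ together with $\|\bp_L\|_\infty \ge p_{k^{*},L}$ and $\|\bp_R\|_\infty \ge p_{k^{*},R}$ immediately yields $\RR \ge 0$, with equality iff $k^{*}$ also attains the maximum in both children. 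Quantifying over all majority classes of the parent gives the stated criterion: $\RR = 0$ iff every majority class of $\cS$ is a majority class of both children.

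\noindent\textbf{Part (b).} \Cref{thm:loss_impurity_equivalence} shows that MSE, CE, and GCE with $q\in(0,1)$ all yield impurities of the form $\widehat{R}^{*}(\cS) = \WS\phi(\bp)$ with $\phi$ equal to the Gini $1-\|\bp\|_2^2$, the entropy $-\bp^{\T}\log\bp$, or $(1-\|\bp\|_{1/(1-q)})/q$, respectively. Each is strictly concave on $\simplex$: the first two by elementary differentiation, and for the third one uses that $\|\bp\|_p$ is strictly convex on $\simplex$ whenever $p>1$ (distinct points of $\simplex$ are never positive scalar multiples of one another), which applies since $p=1/(1-q)>1$. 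Using $\bp = (W_L/\WS)\bp_L + (W_R/\WS)\bp_R$ and the strict form of Jensen's inequality then gives $\WS\phi(\bp) \ge W_L\phi(\bp_L) + W_R\phi(\bp_R)$ with equality iff $\bp_L = \bp_R$, in which case both equal $\bp$. Requiring $\RR = 0$ at every split therefore becomes $\bp_L = \bp_R = \bp$ at every split.

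\noindent\textbf{Main obstacle.} The main subtlety is in part (a), where $1-\|\bp\|_\infty$ is concave but not strictly so, so the equality case only constrains the set of majority classes and not the full distribution; tracking this one majority class at a time, as above, is exactly what produces the weaker stopping criterion than in part (b). Verifying strict convexity of $\|\cdot\|_p$ on $\simplex$ for $p>1$ is the only piece of analysis outside the immediate impurity calculus, and it is standard.
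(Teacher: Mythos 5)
Your proof is correct and follows essentially the same route as the paper's: for part (a) it reduces the halting condition to additivity of the max-norm across the split, and for part (b) to the equality case of Jensen's inequality for strictly concave impurities. The only differences are cosmetic --- you inline the paper's max-norm-equality lemma as a direct equality analysis keyed to a fixed majority class of the parent (which yields the same criterion, since the parent's majority set is contained in the intersection of the children's majority sets exactly when it equals that intersection), and you supply the verification of strict concavity of the GCE impurity via strict convexity of $\|\cdot\|_{1/(1-q)}$ on the simplex, a step the paper asserts without proof.
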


We found in our experiments that this early stopping phenomenon does indeed happen in practice, and tends to help tree methods avoid overfitting in situations with large amounts of noise in the training labels. 
However, we also observed that this early stopping can sometimes lead to underfitting in low noise situations, as predicted in \cite{breiman1984classification} for noise-free data.
Note that we give a necessary and sufficient condition for early stopping with
conservative loss functions and compare to each of the MSE, CE and GCE $(q\in(0,1))$, while a sufficient condition for early stopping with the misclassification impurity and comparison with Gini impurity can be found in \cite{breiman1984classification}. 
% \textcolor{red}{(4.1 and 4.3.1, respectively)}.

\section*{Distribution Losses}
\label{sec:new-framework}

We introduce a new approach for constructing robust loss functions that can
adapt to the noise level.
This allows us to address a limitation of the conservative losses as pointed out
%in \Cref{sec:theory}: 
in the previous section:
while conservative losses are robust against label noise, they can lead to
underfitting in low noise situations.

%There are three conditions that a loss function must satisfy for it to be called
%conservative and lead to the misclassification impurity in recursive greedy risk minimization. One notable condition being boundedness (condition (b) in \Cref{thm:universal}). 
%Considering loss functions satisfying only this condition will allow for discovery of new losses/impurities with some robustness without necessarily being limited to just the misclassification impurity. Help to avoid potential underfitting issues in low noise situations.

%One well known class of functions that are bounded are cumulative distribution functions (CDFs) from probability theory. 
%Recall that a function $F:\R\to\R$ is a CDF if it is right-continuous monotone increasing, $\lim_{x\to -\infty}F(x)=0$ and $\lim_{x\to \infty}F(x)=1$. 

Our approach is based on a simple idea that we will use to unify various common
losses.
Specifically, consider binary classification, and let 
$y \in \{-1, 1\}$, 
$\widehat{y} \in \R$, 
and 
$z = y\widehat{y}$ 
denote the true label, the prediction, and the margin,
respectively, then for any CDF $F$, 
$\ell(y, \widehat{y}) = F(-z)$ can be used as a loss function.
Intuitively, assume the margin of a random example follows the distribution $F$,
then the loss is the probability that the random margin is larger than a value.
We call $\ell$ a \emph{distribution loss}.
The loss is bounded in [0, 1], 
% and it converges 
converging
to 0 and 1 when $z \to +\infty$
and $z \to -\infty$, respectively.
% \textcolor{red}{convergence other way around}

Various commonly used loss functions are distribution losses.

\begin{restatable}{lemma}{distloss}
    \label{lem:distloss}
	The distribution loss is
	the 01 loss $\ell(z)=(1-\mathrm{sign}(z))/2$ 
	for the Bernoulli distribution $\ber(0)$;
	the sigmoid loss $\ell(z)=1/(1+\exp(z))$ 
	for the logistic distribution $\mathsf{Logistic}(0,1)$;
	and the ramp loss $\ell(z)=\max\{0,\min\{1,(1-z)/2\}\}$ 
	for the uniform distribution $\mathsf{U}(-1,1)$.
\end{restatable}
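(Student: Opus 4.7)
The plan is simply to verify the three claimed identities $\ell(z) = F(-z)$ by direct computation, distribution by distribution. Each case requires writing down the CDF $F$ of the given distribution, substituting $-z$, and simplifying to match the stated loss. No nontrivial machinery is needed; the only thing to handle carefully is boundary behavior (e.g.\ tie-breaking at $z=0$ for the degenerate Bernoulli).

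First I would treat the logistic case, since it is the cleanest. The CDF of $\mathsf{Logistic}(0,1)$ is $F(t) = 1/(1+\exp(-t))$, so
\[
F(-z) \;=\; \frac{1}{1+\exp(z)},
\]
which is exactly the sigmoid loss. Next I would handle the uniform case: the CDF of $\mathsf{U}(-1,1)$ is the piecewise-linear function equal to $0$ below $-1$, equal to $(t+1)/2$ on $[-1,1]$, and equal to $1$ above $1$. Substituting $t=-z$ gives $(1-z)/2$ on $z \in [-1,1]$, $1$ when $z < -1$, and $0$ when $z > 1$; collapsing this into one expression yields exactly $\max\{0,\min\{1,(1-z)/2\}\}$.

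Finally I would address the Bernoulli case. A $\ber(0)$ random variable is degenerate at $0$, so its CDF is $F(t) = \mathbb{I}(t \geq 0)$. Hence $F(-z) = \mathbb{I}(-z \geq 0) = \mathbb{I}(z \leq 0)$, which equals $(1-\mathrm{sign}(z))/2$ under the convention $\mathrm{sign}(0)=-1$ (or, symmetrically, using the left-continuous CDF with $\mathrm{sign}(0)=+1$); either convention resolves the tie consistently. I would briefly flag this convention in the proof to avoid ambiguity at the single point $z=0$, which is measure-zero anyway.

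The hard part, if one can call it that, is only cosmetic: stating the three CDFs cleanly and organizing the piecewise case for the uniform distribution so that the collapse to $\max\{0,\min\{1,\cdot\}\}$ is transparent. There is no genuine mathematical obstacle, and the whole argument should fit in a handful of lines.
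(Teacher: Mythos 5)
Your proposal is correct and follows essentially the same route as the paper's proof: write down each CDF, substitute $-z$, and simplify case by case. Your explicit remark about the sign convention at $z=0$ for the degenerate Bernoulli is a small point of care the paper leaves implicit, but it does not change the argument.
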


We instantiate the distribution loss framework to create a robust loss function
with a parameter that allows for adaptation to different noise levels.
\begin{restatable}{lemma}{neloss}
\label{lem:neloss}
	For an exponential variable $X \sim \mathsf{Exp}(1)$, consider its shifted
	negative $Z=\mu - X$ for some $\mu \ge 0$, then the CDF of $Z$ is 
	\begin{align*}
	    F(z) = \min\{1,\exp(z-\mu)\},
	\end{align*}
	and the corresponding loss function is 
	\begin{align*}
	    \ell(\widehat{y},y)=\min\{1,\exp(-y\widehat{y}-\mu)\}.
	\end{align*}
	We call this loss the negative exponential (NE) loss. 
\end{restatable}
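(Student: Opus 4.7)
The plan is to carry out two direct computations: first derive the CDF of $Z=\mu-X$ from the CDF of a standard exponential variable, and then plug this CDF into the definition of the distribution loss $\ell(y,\widehat{y})=F(-y\widehat{y})$ introduced just before the lemma.

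For the CDF, I would write $F(z)=\Pm(Z\le z)=\Pm(\mu-X\le z)=\Pm(X\ge \mu-z)$ and then split into two cases based on the sign of $\mu-z$. When $z\ge \mu$ we have $\mu-z\le 0$, and since $X\ge 0$ almost surely the probability equals $1$. When $z<\mu$ we have $\mu-z>0$, and the standard exponential CDF gives $\Pm(X\ge \mu-z)=\exp(-(\mu-z))=\exp(z-\mu)$. Combining the two cases, $F(z)=1$ when $\exp(z-\mu)\ge 1$ and $F(z)=\exp(z-\mu)$ otherwise, which is precisely $F(z)=\min\{1,\exp(z-\mu)\}$. The assumption $\mu\ge 0$ is only needed to make the cutoff $z=\mu$ nonnegative (and thus keep the loss well-defined as a sensible penalty for negative margins); the derivation itself does not require it.

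For the loss, by the distribution-loss convention recalled above the lemma, $\ell(\widehat{y},y)=F(-y\widehat{y})$. Substituting $z=-y\widehat{y}$ into the formula just derived yields
\[
\ell(\widehat{y},y)=\min\{1,\exp(-y\widehat{y}-\mu)\},
\]
which is the stated NE loss.

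There is no real obstacle here: both steps are short transformations. The only point requiring mild care is the piecewise handling of $F$, which collapses cleanly into the $\min$ expression, and consistency with the sign convention $z=y\widehat{y}$ that makes large positive margins incur vanishing loss.
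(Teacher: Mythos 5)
Your proposal is correct and follows essentially the same route as the paper: compute the CDF of the shifted negative $Z=\mu-X$ (the paper does this via $F_Z(z)=1-\max\{0,1-e^{-(\mu-z)}\}$ rather than your explicit case split, but these are the same calculation) and then substitute $z=-y\widehat{y}$ per the distribution-loss convention. No gaps.
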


The plot of the NE loss in \Cref{fig:neg-exp} reveals several robustness
properties:
first, the loss is capped at 1 even for large negative margins, thus preventing
imposing excessively large penalty on a noisy example far from the decision
boundary;
second, the rapid decrease of the loss to zero helps the classifier to avoid
overfitting to large positive margins;
third, the robustness parameter $\mu$ allows control on the range of negative
margins that should be penalized, with a large $\mu$ allowing ignoring more
noisy examples with negative margins.

NE loss can be viewed as a capped and shifted variant of the standard
exponential loss.
While capping creates zero gradient and thus may make gradient-based learning
difficult, this is not a limitation for decision tree learning as we only
need to compute the impurity.
\Cref{thm:negexp} gives an expression for the impurity corresponding to the NE
loss.

\begin{figure}[tb]
    \centering
    \includegraphics{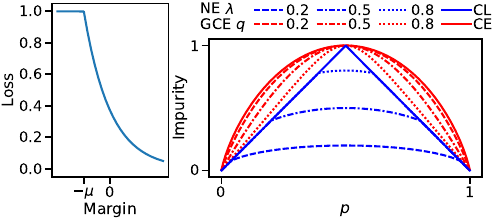}
    \caption{
    Left: NE loss as a function of the margin $y\widehat{y}$. 
	Right: 
Blue lines are NE impurities with their $\lambda$ values provided in the legend.
The special case with $\lambda=1$ is denoted by CL (conservative loss).
Red lines are GCE impurities with their $q$ values provided in the legend.
The special case with $q = 0$ is denoted by CE (cross entropy).
 % Cross entropy (also GCE impurity with $q=0$),
 %  GCE impurities for $q=0.2, 0.5, 0.8$,
 %  the misclassification impurity (also GCE impurity with $q=1$ and NE impurity with $\lambda=1$), and
 %  NE impurities for $\lambda = 0.2, 0.5, 0.8$.
  The impurities have been scaled for better comparison.
  }
    \label{fig:neg-exp}
\end{figure}

% red lines

% \textcolor{red}{Salient features: capped penalty for large margins inherited from CDF, avoid large margins by making the loss go to zero quickly enough.}

%\begin{restatable}[Negative Exponential Impurity]{theorem}{negexp}
\begin{restatable}{theorem}{negexp}
\label{thm:negexp}
	%In a binary classification problem, suppose the labels are $y\in\{-1,+1\}$. 
	The NE loss's partial empirical risk is given by
	\begin{align*}
		\widehat{R}(\widehat{y};\cS) &= \sum_{(\bx,y)\in\cS} \min\{1,\exp(-\widehat{y}y-\mu)\}/|\cD|,  
	\end{align*}
	and the corresponding impurity (i.e., minimum partial empirical risk) is
	\begin{align*}
		\underset{\widehat{y}\in\R}{\min}\, \widehat{R}(\widehat{y};\cS) 
		&= \WS \min\left\{
			1-\|\bp\|_\infty,\ 
			\lambda
			\sqrt{(1-\|\bp\|_2^2)/2}
			\right\},
	\end{align*}
	with $\bp=(p_{+},p_{-})^\T$, $p_{+}$ being the proportion of examples with $y=+1$ in $\cS$ and 
 % $\lambda=e^{-\mu}$.
 $\lambda=2e^{-\mu}$.
\end{restatable}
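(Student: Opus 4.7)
The first formula is immediate by plugging the NE loss $\ell(\widehat{y},y)=\min\{1,\exp(-y\widehat{y}-\mu)\}$ from \Cref{lem:neloss} into the definition of the partial empirical risk, so the real work is in the impurity expression. Writing $n_{+}$ and $n_{-}$ for the counts of examples in $\cS$ with labels $+1$ and $-1$ respectively, I rewrite
\[
\widehat{R}(\widehat{y};\cS) = \frac{1}{|\cD|}\bigl[n_{+}\min\{1,e^{-\widehat{y}-\mu}\} + n_{-}\min\{1,e^{\widehat{y}-\mu}\}\bigr].
\]
The cap in the first term activates when $\widehat{y}\le -\mu$, and the cap in the second when $\widehat{y}\ge \mu$, so (using $\mu\ge 0$) I decompose $\R$ into three regions: the left tail $(-\infty,-\mu]$, the interior $[-\mu,\mu]$, and the right tail $[\mu,\infty)$, on each of which the risk has a closed form.

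On the interior, $\widehat{R}(\widehat{y};\cS)\cdot|\cD| = e^{-\mu}(n_{+}e^{-\widehat{y}} + n_{-}e^{\widehat{y}})$ is strictly convex in $\widehat{y}$, so the first-order condition gives the unconstrained minimizer $\widehat{y}^{*}=\tfrac12\log(n_{+}/n_{-})$ with value $2e^{-\mu}\sqrt{n_{+}n_{-}}=\lambda\sqrt{n_{+}n_{-}}$. On the two tails the risk is monotone in $\widehat{y}$, with infima $n_{+}$ and $n_{-}$ respectively, approached as $\widehat{y}\to\mp\infty$. The next step is to verify that, regardless of whether $\widehat{y}^{*}$ actually lies in $[-\mu,\mu]$, the overall infimum equals $\min\{n_{+},n_{-},\lambda\sqrt{n_{+}n_{-}}\}/|\cD|$. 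When $\widehat{y}^{*}\in[-\mu,\mu]$ this is immediate from the three pieces. When $\widehat{y}^{*}>\mu$ (equivalently $n_{+}>n_{-}e^{2\mu}$), the interior function is monotone decreasing on $[-\mu,\mu]$ and joins continuously with the right tail, so the joint infimum across those two regions is $n_{-}$; a direct estimate then gives $\lambda\sqrt{n_{+}n_{-}}>2n_{-}>n_{-}$, keeping the three-way minimum formula consistent. The case $\widehat{y}^{*}<-\mu$ is symmetric.

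Finally, I translate back to the stated form using $\min(n_{+},n_{-})/|\cD| = \WS\min(p_{+},p_{-}) = \WS(1-\|\bp\|_{\infty})$ (since $p_{+}+p_{-}=1$) and $\sqrt{n_{+}n_{-}}/|\cD| = \WS\sqrt{p_{+}p_{-}} = \WS\sqrt{(1-\|\bp\|_{2}^{2})/2}$ via the identity $p_{+}^{2}+p_{-}^{2}=1-2p_{+}p_{-}$. The only genuine obstacle is the case analysis confirming that the clean three-way minimum formula remains valid when the unconstrained critical point $\widehat{y}^{*}$ escapes the interior; beyond this the computation is essentially algebraic, and the non-attainment of the tail infima is harmless because they are realized as limits and match the minimum interpretation of the impurity.
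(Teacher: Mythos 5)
Your proof is correct and follows essentially the same route as the paper's: split $\R$ into the three regions determined by where each cap activates, compute the infimum on each piece, and translate $\min\{p_{+},p_{-},\lambda\sqrt{p_{+}p_{-}}\}$ into the stated norm expressions. You are in fact slightly more careful than the paper, which applies AM--GM on the interior and tacitly treats the equality point $\widehat{y}^{*}=\tfrac{1}{2}\log(p_{+}/p_{-})$ as attainable there; your explicit check that the three-way minimum formula survives when $\widehat{y}^{*}$ escapes $[-\mu,\mu]$ closes that small gap.
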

Note that in the definition of the NE impurity above, $\widehat{y}$ ranges over $\R$ instead 
of over the 1-simplex.
This is because we have chosen the prediction $\widehat{y}$ to be a positiveness score, instead of 
a two-dimensional vector representing a probability distribution, as done in 
the preliminaries section.
%\Cref{sec:background}.

We provide a generalization of the NE impurity to the multiclass setting:
\begin{align*}
	\underset{\widehat{y}\in\R}{\min}\, \widehat{R}(\widehat{y};\cS) 
	&= \WS \min\left\{
		1-\|\bp\|_\infty,\ 
		\lambda
		\sqrt{\frac{1-\|\bp\|_2^2}{K/(K-1)}}
		\right\}.
\end{align*}
Clearly, this reduces to the binary NE impurity when $K = 2$.
The $K/(K-1)$ factor is chosen such that when 
% $\lambda = 2^{-0} = 1$, 
$\lambda 
% = e^{-0} 
= 1$, 
the
two expressions under $\min$ have the same maximum value,
%of $(K-1)/K$,
a property that
holds for the binary case in \Cref{thm:negexp}.

We visualize the NE impurity and compare it against the 
entropy, 
GCE, % impurity, 
and 
% the 
misclassification 
impurities in \Cref{fig:neg-exp}.
The NE impurity interpolates between the misclassification impurity and the
square root of the Gini impurity:
for medium $p$ values, the NE impurity equals the square root of the Gini impurity
(up to a multiplicative constant); while for small and large $p$ values, the NE
impurity is the same as the misclassification impurity.
In particular, we get the misclassification impurity for $\lambda=1$, and we get
the square root of the Gini impurity as $\lambda \to 0$.
In contrast, the GCE impurities (including entropy) upper bound the
misclassification impurity.

The NE impurity supports an adaptive robustness mechanism:
the robustness parameter $\lambda \in (0, 1]$ controls the similarity between
the NE impurity and the misclassification impurity, and it can be tuned to adapt
to the noise rate, as done in our experiments.
A larger $\lambda$ is associated with higher similarity with the
misclassification impurity, thus encouraging more early stopping and higher
robustness.
An alternative effective way to control early stopping is setting the minimum
number of samples in a leaf \cite{ghosh2017robustness}.
Our method is distribution dependent and more flexible in the sense that it
allows for leaf nodes with varying numbers of samples. 
We also note that the GCE impurity \cite{zhang2018generalized} has a
hyperparameter $q\in[0,1]$ controlling how it interpolates between the robust
misclassification impurity and the non-robust entropy impurity. 
However, the early stopping property as in \Cref{thm:early_stopping} only takes
effect for $q \geq 1$.
We see in experiments that varying $q$ between 0 and 1 does not seem to improve
robustness to label noise for tree methods as a result. 

%The form of the minimum risk suggests a way to generalize the result to multiclass classification; by plugging in higher dimensional probability vectors $\bp$. However, for $K>2$, the form of the corresponding loss function is currently not known and it's discovery is left to future research. 

% If we consider the minimization of a scaled version of the empirical risk 
% 	$K\widehat{R}_{\NE}(\widehat{\by};\cD)/(K-1)$ 
% so that 
% $0\leq \widehat{R}_{\NE}^*(\cS)\leq 1$, 

%\paragraph{Ensemble} can use new splittig criteria in ensemble of decision trees, e.g., random forest \cite{breiman2001random} and extremely randomized trees \cite{geurts2006extremely}.
%\paragraph{Risk reduction importance} Can measure the importance of a feature using the risk reduction importance \cite{wilton2022positive}.

\section*{Experiments}
\label{sec:experiments}
% \textcolor{red}{experiments still running}

\begin{figure*}[h!]
    \centering
    \includegraphics
    % [width=\linewidth]
    [trim=0.7em 0.7em 0.7em 0.7em, clip]
    {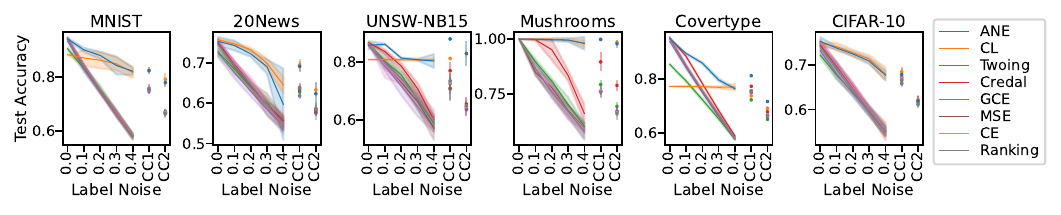}
    \caption{
    % {\scriptsize this is some demo text 0.95}
    Mean test accuracy with 2x sd bands for DT on binary classification problems using different splitting criteria.   
    Training labels corrupted using uniform noise $\eta\in\{0.0, 0.1,0.2,0.3,0.4\}$ and class conditional noise CC1 $(0.1,0.3)$ and CC2 $(0.2,0.4)$.
    % Results averaged over 5 replications with 2 standard deviations shown.
    }
    \label{fig:dt-bin}
\end{figure*}

\begin{figure*}[h!]
    \centering
    \includegraphics
    % [width=\linewidth]
    [trim=0.7em 0.7em 0.7em 0.7em, clip]
    {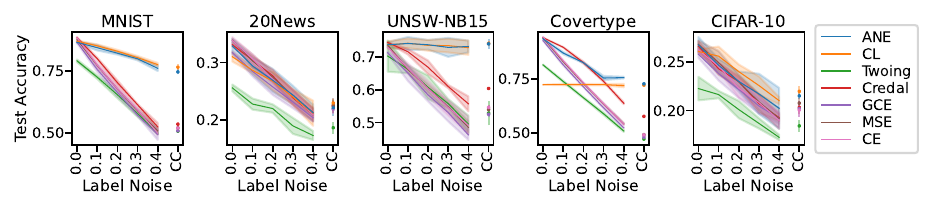}
    % \caption{Multiclass classification, decision tree, 5 replications}
    \caption{
    Mean test accuracy with 2x sd bands for DT on multiclass classification problems using different splitting criteria.   
    Training labels corrupted using uniform noise $\eta\in\{0.0,0.1,0.2,0.3,0.4\}$ and class conditional (CC) noise.
    % Results averaged over 5 replications with 2 standard deviations shown.
    }
    \label{fig:dt-mc}
\end{figure*}

\begin{figure*}[h!]
    \centering
    \includegraphics
    % [width=\linewidth]
    [trim=0.7em 0.7em 0.7em 0.7em, clip]
    {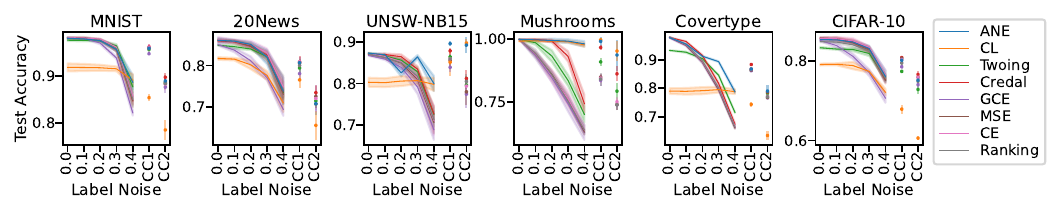}
    % \caption{Binary classification, random forest, 5 replications}
    \caption{
    Mean test accuracy with 2x sd bands for RF on binary classification problems using different splitting criteria.   
    Training labels corrupted using 
    uniform noise $\eta\in\{0.0,0.1,0.2,0.3,0.4\}$ and class conditional noise CC1 $(0.1,0.3)$ and CC2 $(0.2,0.4)$.
    % Results averaged over 5 replications with 2 standard deviations shown.
    }
    \label{fig:rf-bin}
\end{figure*}

\begin{figure*}[h!]
    \centering
    \includegraphics
    % [width=0.86\linewidth]
    [trim=0.7em 0.7em 0.7em 0.7em, clip]
    {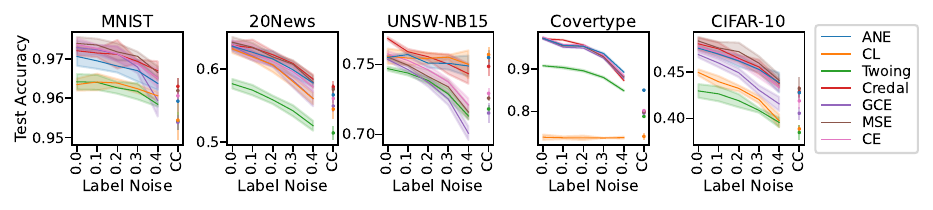}
    % \caption{Multiclass classification, random forest, 5 replications}
    \caption{
    Mean test accuracy with 2x sd bands for RF on multiclass classification problems using different splitting criteria.   
    Training labels corrupted using uniform noise $\eta\in\{0.0,0.1,0.2,0.3,0.4\}$ and class conditional (CC) noise.
    % Uniform noise levels $0.0$, $0.1$, $0.2$, $0.3$, $0.4$ and class conditional (CC) noise.
    % Results averaged over 5 replications with 2 standard deviations shown.
    }
    \label{fig:rf-mc}
\end{figure*}

In this section we empirically compare the NE loss with several other loss functions and tree growing methods. Selected baselines include 
NE loss, 
conservative losses (CLs), 
twoing split criteria \cite{breiman1984classification}, 
Credal-C4.5 \cite{mantas2014credal}, 
GCE loss \cite{zhang2018generalized}, 
ranking loss \cite{yang2019robust}, 
CE loss and MSE loss.
Note that twoing, Gini and misclassification are shown to be robust with large number of samples at leaf in \cite{ghosh2017robustness}.
We are interested in the predictive performance on clean test data of both
decision trees and random forests trained using noisily labeled training data in
various label noise settings. 
We also investigate the effect of tuning the hyperparameter $\lambda$ in the NE impurity. 
%We tune the hyperparameter $\lambda$ for the NE impurity to allow adapting to
%the noise rate.

\paragraph{Datasets} 
We consider some commonly used datasets from UCI including Covertype \cite{misc_covertype_31}, 20News \cite{lang1995newsweeder}, Mushrooms \cite{misc_mushroom_73}, as well as the MNIST digits \cite{lecun1998gradient}, CIFAR-10 \cite{krizhevsky2009learning} and UNSW-NB15 \cite{moustafa2015unsw}. 
Datasets were selected due to diversity in number of samples, number of features, number of classes, type (image, tabular, text), and domain (computer vision, cyber security, cartography).
We look at both multiclass ($K>2$) and binary ($K=2$) classification problems for each dataset. Mushrooms datasets and ranking loss excluded from multiclass classification experiments due to Mushrooms being a binary classification dataset, and ranking loss only defined for binary classification problems. 
\Cref{tab:datasets} is a summary of the benchmark datasets.

\begin{table}[t]
    \centering
    \begin{tabular}{lcccc}
        Name & Train & Test & Feature & Class  \\\hline
         MNIST Digits & 60 000 & 10 000 & 784 & 10 \\
         CIFAR-10 & 50 000 & 10 000 & 3 072 & 10 \\
         20News & 11 313 & 7 531 & 300 & 20 \\
         UNSW-NB15 & 175 341 & 82 332 & 39 & 10 \\
         Covertype & 464 809 & 116 203 & 54 & 7 \\
         Mushrooms & 6499 & 1625 & 112 & 2
    \end{tabular}
    \caption{Benchmark datasets.}
    \label{tab:datasets}
\end{table}

% \paragraph{preprocesssing} 
Binarized versions of labels are based on the processing in \cite{kiryo2017positive,wilton2022positive}.
For 20News, GloVe pre-trained word embeddings \cite{pennington2014glove} were used (glove.840B.300d), 
then average pooling was applied over the word embeddings to generate the embedding for each document. 
The binarized classes are ‘alt., comp., misc., rec.’ versus ‘sci., soc., talk.’.
For MNIST, the binarized classes are  ‘0, 2, 4, 6, 8’ versus ‘1, 3, 5, 7, 9’ (even vs odd).
For CIFAR-10, the binarized classes are ‘airplane, automobile, ship, truck’ versus ‘bird, cat, deer, dog, frog, horse’ (animal vs non-animal).
For UNSW-NB15, we removed \texttt{ID} and the nominal features \texttt{proto}, \texttt{service} and \texttt{state}. 
The binarized labels are attack versus benign.
For Covertype, the binarized classes are the second class versus others, as done in \cite{collobert2001parallel}, and the train-test split was performed using 
scikit-learn train\_test\_split with train\_size 0.8 and random\_state 0.
For Mushrooms, we used the LIBSVM \cite{chang2011libsvm} version, and the train-test split was performed identically as for Covertype. 

\paragraph{Label Noise} 
We used both uniform noise and class conditional noise to corrupt the training labels, while testing labels had no noise applied. 
For uniform label noise, noise rates $\eta=0.0,\ 0.1,\ 0.2,\ 0.3$ and $0.4$ were used. 
For class conditional noise in binary classification two noise rates were used; $(0.1,0.3)$ and $(0.2,0.4)$. 
For multiclass classification, transition probabilities were constructed based on the similarity between classes using the Mahalanobis distance. 
See 
% \Cref{alg:ccn} 
Algorithm 1
in 
% \Cref{app:class-conditional-noise} 
Appendix J
for details.

\paragraph{Hardware \& Software} 
Experiments were performed using Python on a computer cluster with Intel Xeon E5 Family CPUs @ 2.20GHz and 192GB memory running CentOS. 

\paragraph{Hyperparameters}
Following common practice \cite{pedregosa2011scikit}, 
we set no restriction on maximum depth or number of leaf nodes, 
minimum one sample per leaf node,
100 trees in each RF, 
each using only $\sqrt{d}$ randomly chosen features, 
%trained on bootstrapped training set of size $n$,
and 
predictions with RF made by majority average label distribution over all trees.
The NE impurity hyperparameter $\lambda\in\{0,0.25,0.5,0.75,1\}$ was tuned by
training on 80\% of the noisy training set and validation on the other 20\%.
GCE $q=0.7$ as recommended for NNs \cite{zhang2018generalized},
Credal-C4.5 $s=1$ as recommended in \cite{mantas2014credal}. 

\subsection{Classification Performance}
We trained each model on 5 random noisy training sets to account for randomness in
training labels and learning algorithms.
We report the average test set accuracies and 2x standard deviation bands. 
% in both the main text and in \Cref{app:results}. 
% We report the average test set accuracies in the main text, but additionally
% report 2x standard deviation bands in \Cref{app:results}.
%Our theoretical results apply directly to DT, however we also test RF versions 
Additional results can be found in 
% \Cref{app:results,app:cifar10-pretrained}.
Appendices K and L.

\paragraph{Effect of Loss Function on Robustness to Label Noise for Decision Trees}
Decision tree results for binary and multiclass settings are summarized in
\Cref{fig:dt-bin,fig:dt-mc}, respectively.
Adaptive NE loss (ANE; NE with tuned $\lambda$) is almost always a top performer
across all label noise settings, label configurations (binary, multiclass) and
datasets. 
Conservative loss functions sometimes give poor performance in low noise
settings, but usually give strong performance in high noise settings. 
CE, MSE, Ranking, GCE give similar performance. 
Twoing split criteria typically gives the worst performance. 
Credal-C4.5, across many noise settings and datasets, usually has performance
somewhere between CE, MSE, Ranking, GCE and ANE.

\paragraph{Effect of Loss Function on Robustness to Label Noise for Random Forest}
Random forest results for binary and multiclass settings are summarized in
\Cref{fig:rf-bin,fig:rf-mc}, respectively.
% Some experiments exceeded the allocated time limit, particularly those for twoing, but it does not seem that these affect the qualitative conclusions below.
We first note that the 
% performance differences for different loss functions 
differences in performance across loss functions
are less significant than for
decision trees in general, suggesting that an ensemble is more robust than a
single tree.
Note that RF still seems to boost performance of non-conservative loss functions,
though this boost is less for conservative losses. 
Overall, ANE still shows strong performance. 

\begin{figure}[t]
    \centering
    \includegraphics{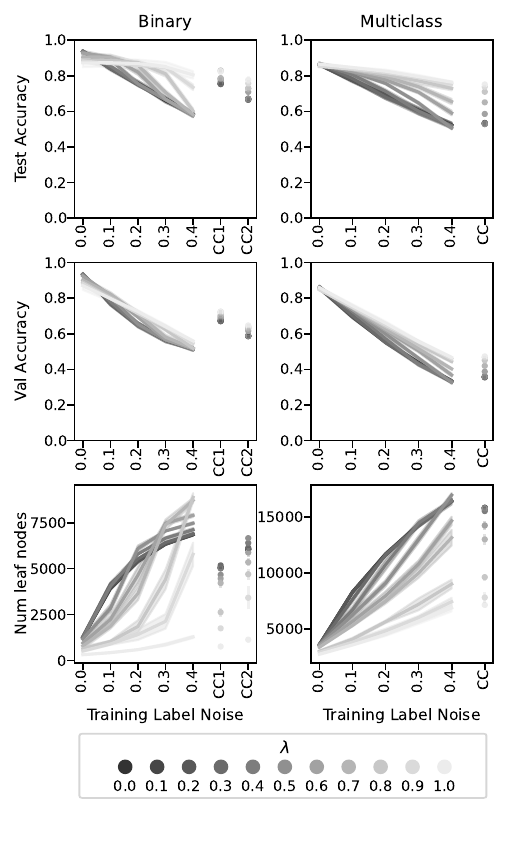}
    \caption{Performance of DT using NE loss with different values of $\lambda$ on MNIST digits, binary and multiclass classification, 20\% of noisily labeled training data used for validation.
    Results reported as mean $\pm$ 2x sd over 5 replications.}
    \label{fig:threshold_tuning}
\end{figure}

\paragraph{Effect of NE Threshold on Predictive Performance and Tree Size}
We investigated the effect of the robustness parameter $\lambda$ for the NE
impurity by performing experiments on 
% both the binary and multiclass versions of
the MNIST digits dataset. 
%Each experiment replicated 5 times to account for randomness in training labels.
%20\% noisy training labels reserved for validation. 
The results in \Cref{fig:threshold_tuning} show that different $\lambda$ values
often have best performance for different noise settings, without any single
$\lambda$ value yielding the best performance in all cases.
This together with the results in \Cref{fig:dt-bin,fig:dt-mc} suggests that
tuning $\lambda$ allows effective control on early stopping, 
and in turn,
robustness to label noise.  
Additional discussion on early stopping in binary classification is provided in 
% \Cref{app:alternate_early_stopping}.
Appendix M.
% \Cref{app:alternate_early_stopping} contains additional discussion on early stopping.
%Method with the best validation accuracy on noisy data does seem to also perform well on noise free data.
%Conservative loss often the best choice, particularly for high levels of label noise. 
%Sqrt gini seldom the best performer.

% \textcolor{red}{If reviewers ask about contradictions in results from competing methods: 
% other papers do not disentangle effect of loss function on predictive performance, often other parameters such as min samples at leaf are tuned, and max depth specified. Both of these factors affect robustness to label noise. }

\section*{Conclusions}

We developed new insight and tools on robust loss functions for decision tree learning.
We introduced the conservative loss as a general class of robust loss functions
and provided several results showing their robustness in decision tree learning.
In addition, we introduced the distribution loss as a general framework for
building loss functions based CDFs, and instantiated it to introduce the
robust NE loss.
Our experiments demonstrated that our NE loss effectively alleviates the adverse effect of 
noise in various noise settings.

%future work:
%\begin{itemize}
    %% \item also tuning min samples leaf
    %\item explanations for robustness of ensembles of trees to label noise, 
    %\item model size vs performance with noisy labels using ANE,
    %\item Impurities based on other robust loss functions 
    %\item effectively incorporating unlabeled data into training. 
%\end{itemize}

% \section*{Ethical Statement}
% optional 

\section*{Acknowledgments}
We thank the anonymous reviewers for their many helpful comments, which we have incorporated to 
significantly improve the presentation of the paper.
In particular, we thank one of them for a comment that inspired the discussion in
% \Cref{app:alternate_early_stopping}.
Appendix M.

% \bibliography{aaai24}
\bibliography{refs}

\clearpage
\appendix
\section{Proof of \Cref{thm:loss_impurity_equivalence}}
\label{app:equivalence_thm}
\lie*
\begin{proof} 
    \noindent{\bf (a)} 
    The MSE is
    \begin{align*}
        \widehat{R}_{\mathrm{MSE}}(\widehat{\by};\cS) &= \sum_{(\bx,\by)\in\cS} \|\widehat{\by}-\by\|_2^2/|\cD|.
    \end{align*}
    Setting the gradient to 0, 
    \begin{align*}
        \nabla_{\widehat{\by}} \widehat{R}_{\mathrm{MSE}}(\widehat{\by};\cS)
        =  \sum_{(\bx,\by)\in\cS} \frac{2(\widehat{\by}-\by)}{|\cD|} 
        = 2W_{\cS}(\widehat{\by}-\bp)
        = 0.
    \end{align*}
    Solving the equation gives
    \begin{align*}
        \widehat{\by}^* &= \bp.
    \end{align*}
    Hence, knowing that $\bp$ and each of $\{\by\}$ are probability vectors, the minimum partial empirical risk is
    \begin{align*}
        \widehat{R}_{\mathrm{MSE}}^*(\cS) &= \sum_{(\bx,\by)\in\cS} \| \bp-\by \|_2^2/|\cD|,
        \\
        &= \frac{1}{|\cD|} \sum_{(\bx,\by)\in\cS} \left[\|\bp\|_2^2 + 1 - 2\bp^\T\by\right],
        % \\
        % &= W_{\cS}\|\bp\|_2^2 + W_{\cS} - 2w\bp^\T \sum_{(\bx,\by)\in\cS} \by
        \\
        &= W_{\cS}\|\bp\|_2^2 + W_{\cS} - 2W_{\cS}\bp^\T \bp,
        % \\
        % &= \|\bp\|_2^2 + 1 - 2\|\bp\|_2^2
        \\
        &= W_{\cS}(1-\|\bp\|_2^2).
    \end{align*}

    \medskip\noindent{\bf (b)} 
    For the CE loss, the partial empirical risk can be simplified by grouping together examples with common labels
    \begin{align*}
        \widehat{R}_{\mathrm{CE}}(\widehat{\by};\cS) &=  -\frac{1}{|\cD|}\sum_{(\bx,\by)\in\cS} \by^\T \log \widehat{\by},
        \\
        &= -W_{\cS} \sum_{j=1}^K p_j\log \widehat{y}_j.
    \end{align*}
    Use Lagrange multiplier to ensure solution sums to one
    \begin{align*}
        \mathcal{L}(\widehat{\by},\lambda) &:= -W_{\cS} \sum_{j=1}^K p_j\log \widehat{y}_j + \lambda \left( \sum_{j=1}^K \widehat{y}_j - 1 \right).
    \end{align*}
    The $k$-th component of the optimal constant probability vector prediction satisfies the equation
    \begin{align*}
        \frac{\partial\mathcal{L}}{\partial\widehat{y}_k} = -W_{\cS}  p_k/\widehat{y}_k + \lambda=0.
    \end{align*}
    Solving this equation gives $\widehat{y}_k^* = \WS p_k/\lambda$. Applying the constraint to solve for $\lambda$ gives
    \begin{align*}
        1 &= \sum_{j=1}^K \widehat{y}_j^* = \sum_{j=1}^K \WS p_k/\lambda = \WS/\lambda,
        \\
        \lambda &= \WS.
    \end{align*}
    Plugging the optimal $\lambda$ into the expression for $\widehat{y}_j^*$, we obtain
    \begin{align*}
        \widehat{\by}^* &= \bp,
        \\
        \widehat{R}_{\mathrm{CE}}^*(\cS) &= \WS(-\bp^\T\log \bp).
    \end{align*}

    \medskip\noindent{\bf (c)} 
    The form of the partial empirical risk can be simplified based on the prediction $\widehat{\by}$
    \begin{align*}
        \widehat{R}_{01}(\widehat{\by};\cS) &= \frac{1}{|\cD|}\sum_{(\bx,\by)\in\cS}  \I(\widehat{\by}\ne\by),
        \\
        &= \WS \sum_{j=1}^K p_j \I(\widehat{\by}\ne \be_j),
        \\
        &= \WS\begin{cases}
            1,& \widehat{\by} \notin \{\be_j\}_{j=1}^K, \\
            1-p_j,& \widehat{\by}=\be_j.
        \end{cases}  
    \end{align*}
    The prediction $\widehat{\by}^*$ that minimizes the partial empirical risk is thus   
    \begin{align*}
        \widehat{\by}^* &= \bm{e}_{\argmax(\bp)},
    \end{align*}
    with optimal risk
    \begin{align*}
        \widehat{R}_{01}^*(\cS) &= \WS(1-\|\bp\|_\infty).
    \end{align*}

    \medskip\noindent{\bf (d)} 
    % \label{sec:GCE_q_in_01}
    For $q=0$, the GCE loss is identical to CE loss \cite{zhang2018generalized}, and hence both loss functions lead to the same impurity. 
    
    Fix $q\in(0,1)$. The partial empirical GCE risk with constant prediction $\widehat{\by}$ is
    \begin{align*}
        % \ell(\widehat{\by},\by) &= \frac{1-(\by^\T\widehat{\by})^q}{q}
        % \\
        \widehat{R}_{\mathrm{GCE}}(\widehat{\by};\cS) &= \frac{1}{|\cD|} \sum_{(\bx,\by)\in\cS} \frac{1-(\by^\T\widehat{\by})^q}{q},
        \\
        &= \WS\sum_{j=1}^K p_j \frac{1-\widehat{y}_j^q}{q},
        \\
        &= \WS (
        1-\sum_{j=1}^K p_j \widehat{y}_j^q
        )/q.
    \end{align*}
    By H\"{o}lder's inequality we have 
    \begin{align*}
        \sum_{j=1}^K p_j\widehat{y}_j^q 
        &\leq \|\bp\|_{1/(1-q)} \left( \sum_{j=1}^K (\widehat{y}_j^{q})^{1/q} \right)^{q}, 
        \\
        &= \|\bp\|_{1/(1-q)} \left( \sum_{j=1}^K \widehat{y}_j \right)^{q}. 
    \end{align*}
    We get equality in H\"older's inequality if and only if 
    $\widehat{y}_j^q \propto p_j^{1/(1-q)-1},\,\forall j=1,\ldots,K$. 
    Therefore, we can choose $\widehat{\by}^*$ to be the probability vector with elements 
    $\widehat{y}_j^* = p_j^{1/(1-q)}/\sum_{k=1}^K p_k^{1/(1-q)},\,\forall j=1,\ldots,K$. Hence, the minimum partial empirical risk becomes
    % \textcolor{red}{Use holders inequatilty, not lagrange multipliers}    
    % Apply Lagrange multiplier to ensure solution sums to one:
    % \begin{align*}
    %     {\cal{L}}(\widehat{\by},\lambda) :&= \WS\left(
    %     \frac{1}{q}-\frac{1}{q}\sum_{j=1}^K p_j \widehat{y}_j^q \right)
    %     +\lambda \left( \sum_{j=1}^K \widehat{y}_j-1 \right)
    %     \\
    %     \frac{\text{d}{\cal{L}}}{\text{d} \widehat{y}_k} &= -\WS p_k\widehat{y}_k^{q-1} +\lambda
    %     \\
    %     \widehat{y}_k^* &= \left( \frac{\lambda}{p_k\WS} \right)^{1/(q-1)}
    %     \\
    %     &= \WS^{1/(1-q)} \lambda^{1/(q-1)} p_k^{1/(1-q)} 
    % \end{align*}
    % Apply constraint to find $\lambda$:
    % \begin{align*}
    %     1 &= \sum_j \widehat{y}_j^* 
    %     \\
    %     &= \WS^{1/(1-q)}\lambda^{1/(q-1)} \sum_j  p_j^{1/(1-q)}
    %     \\
    %     \lambda^* &= \frac{\WS}{ \left(\sum_j  p_j^{1/(1-q)}\right)^{q-1} }
    %     \\
    %     \widehat{y}_k^* &= \WS^{1/(1-q)} \left( \frac{\WS}{ \left(\sum_j  p_j^{1/(1-q)}\right)^{q-1} } \right)^{1/(q-1)} p_k^{1/(1-q)}
    %     \\
    %     &= \frac{p_k^{1/(1-q)}}{\sum_j p_j^{1/(1-q)}}
    % \end{align*}
    % The minimum partial risk is:
    \begin{align*}
        \widehat{R}_{\mathrm{GCE}}^*(\cS) 
        % &= \WS\left(\frac{1}{q}-\frac{1}{q}\sum_j p_j \widehat{y}_j^{*q}\right)
        % \\
        % &= \WS\left(\frac{1}{q}-\frac{1}{q}\sum_j p_j \left(\frac{p_j^{1/(1-q)}}{\sum_k p_k^{1/(1-q)}}\right)^q\right)
        % \\
        % &= \frac{1}{q}-\frac{1}{q}\left(\sum_k p_k^{1/(1-q)}\right)^{-q}\sum_j p_j^{1/(1-q)}
        % \\
        % &= \frac{1}{q}-\frac{1}{q}\left(\sum_k p_k^{1/(1-q)}\right)^{1-q}
        % \\
        &= \WS(1-\|\bp\|_{1/(1-q)})/q.
    \end{align*}
    % \vspace{.2cm}
    Now let $q>1$. Similarly as for the case $q\in(0,1)$ we can simplify by grouping together examples from each class
    \begin{align*}
        \widehat{R}_{\mathrm{GCE}}(\widehat{\by};\cS) 
        % &= \sum_{(\bx,\by)\in\cS} w\frac{1-(\by^\T\widehat{\by})^q}{q}, 
        % \\
        &= \WS(1-\sum_{j=1}^k p_j\widehat{y}_j^q)/q.
    \end{align*}
    Using $\widehat{\by} \in \simplex$ and $q>1$, we have $\widehat{y}_j^q\leq \widehat{y}_j\,\forall j=1,\ldots,K$ and
    % , 
    % similarly to the argument used for MAE, 
    $\sum_{j=1}^K p_j\widehat{y}_j\leq \max_{j}p_j=\|\bp\|_\infty$. Combining these facts results in 
    \begin{align*}
        \WS(1-\sum_{j=1}^k p_j\widehat{y}_j^q)/q 
        &\geq \WS(1-\sum_{j=1}^k p_j\widehat{y}_j)/q, 
        \\
        &\geq \WS(1-\|\bp\|_\infty )/q.%\label{eq:GCE_q_geq_1} 
    \end{align*}
    We have equality in the above for $\widehat{\by}^*=\be_{\argmax(\bp)}$, giving optimal objective value
    \begin{align*}
        \widehat{R}_{\mathrm{GCE}}^*(\cS) &= \WS(1-\|\bp\|_\infty)/q.
    \end{align*}
    
    \medskip\noindent{\bf (e)} 
    Using the fact that $\widehat{\by} \in \simplex$ and $\by\in\{\be_j\}_{j=1}^K$ for each $\by\in\cS$, the partial empirical risk can be simplified as
    \begin{align*}
        \widehat{R}_{\mathrm{MAE}}(\widehat{\by};\cS) &=  \frac{1}{|\cD|}\sum_{(\bx,\by)\in\cS} \|\widehat{\by}-\by\|_1,
        \\
        &= \frac{1}{|\cD|} \sum_{(\bx,\by)\in\cS} 2-2\widehat{\by}^\T\by,
        \\
        &= 2\WS(1-\widehat{\by}^\T\bp).
    \end{align*}
    Since $\widehat{\by}^\T\bp=\sum_{j=1}^K \widehat{y}_jp_j$ is a convex combination of the values $\{p_j\}$ with weights $\{\widehat{y}_j\}$, we have $\widehat{\by}^\T\bp\leq \max_{j} p_j=\|\bp\|_\infty$, with equality achieved for 
    \begin{align*}
        \widehat{\by}^* &= \be_{\argmax(\bp)}.
    \end{align*}
    Plugging the optimal prediction into the objective gives the result
    \begin{align*}
        \widehat{R}_{\mathrm{MAE}}^*(\cS) &= 2\WS(1-\|\bp\|_\infty). \qedhere
    \end{align*}
\end{proof}

\section{Proof of \Cref{thm:universal}}
\label{app:proof_universal}
The proof of \Cref{thm:universal} relies on \Cref{lemma:earth-moving}.

% \begin{lemma}
\begin{restatable}[]{lemma}{earthmoving}
\label{lemma:earth-moving} 
Let $(p_1,\ldots,p_K) \in \simplex$ be a probability vector and $\{\ell_j\}_{j=1}^K\subset \R$ be a set of constants. If 
\begin{enumerate}
    \item[(a)] $p_1\leq \cdots\leq p_K$, 
    \item[(b)] $\sum_{j=1}^K \ell_j\geq C(K-1)$ and 
    \item[(c)] $0\leq \ell_j\leq C,\,\forall j=1,\ldots,K$, 
\end{enumerate}
then
\begin{align}
    \sum_{j=1}^K p_j l_j &\geq C(1-p_K). \label{eq:earthmoving}
\end{align}
% \end{lemma}
\end{restatable}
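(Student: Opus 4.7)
The plan is to reformulate the problem in terms of the slack variables $\alpha_j \defeq C - \ell_j$, which turns a lower bound on a linear functional into an upper bound on another linear functional where the rearrangement inequality applies cleanly. This reframing makes essential use of all three hypotheses: hypothesis $(b)$ on the sum of the $\ell_j$ becomes an upper bound on $\sum_j \alpha_j$, hypothesis $(c)$ bounds each $\alpha_j$ in $[0,C]$, and hypothesis $(a)$ (the monotonicity of $\bp$) lets us upper-bound a weighted sum by the largest weight.

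Concretely, I would first substitute $\ell_j = C - \alpha_j$ and use $\sum_{j=1}^K p_j = 1$ to rewrite
\begin{align*}
\sum_{j=1}^K p_j \ell_j = C - \sum_{j=1}^K p_j \alpha_j.
\end{align*}
Next I would translate the hypotheses: $(b)$ gives $\sum_{j=1}^K \alpha_j = KC - \sum_{j=1}^K \ell_j \leq KC - C(K-1) = C$, and $(c)$ gives $\alpha_j \in [0,C]$ for each $j$. Then, since $p_j \leq p_K$ for all $j$ by $(a)$, I would bound
\begin{align*}
\sum_{j=1}^K p_j \alpha_j \leq p_K \sum_{j=1}^K \alpha_j \leq p_K C,
\end{align*}
and substitute back to conclude $\sum_{j=1}^K p_j \ell_j \geq C(1 - p_K)$, which is exactly \Cref{eq:earthmoving}.

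I do not expect any serious obstacle here: the inequality is tight (achieved by $\ell_K = 0$ and $\ell_j = C$ for $j<K$, which corresponds to concentrating all slack $\alpha_j$ at the index with the largest $p_j$), and the proof is essentially a two-line rearrangement once the substitution is made. The only mild subtlety worth double-checking is that the per-coordinate upper bound $\alpha_j \leq C$ is not actually needed for the chain of inequalities above; it is the non-negativity $\alpha_j \geq 0$ and the aggregate bound $\sum_j \alpha_j \leq C$ that do the work. I would mention this briefly so the reader sees which hypotheses are truly binding.
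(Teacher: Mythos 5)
Your proof is correct and is essentially the paper's own argument in cleaner clothing: the paper also works with the slacks $C-\ell_j$, uses hypothesis (b) to bound their sum by $C$, and finishes by observing that $\sum_{j\ne K}(p_j-p_K)(C-\ell_j)\le 0$, which is exactly your step $\sum_j p_j\alpha_j\le p_K\sum_j\alpha_j$. Your side remark that only $\ell_j\le C$ (not $\ell_j\ge 0$) is actually needed is also accurate and matches what the paper's chain of inequalities uses.
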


\begin{proof}
    Assume each of $(a)$, $(b)$ and $(c)$ are true. 
    Starting with the right hand side of \eqref{eq:earthmoving}, we can write
    \begin{align*}
        & C(1-p_K) = \sum_{j\ne K} p_j C
        \\
        = & \sum_{j\ne K} p_j (\ell_j + (C-\ell_j)) 
        + p_K 
        (
            \ell_K + \sum_{j\ne K}\ell_j - \sum_{j=1}^K\ell_j
        ), 
        \\
        \leq & \sum_{j\ne K} p_j (\ell_j + (C-\ell_j)) 
        \\
        &\quad + p_K 
        (
            \ell_K + \sum_{j\ne K}\ell_j - C(K-1)
        ),
        % ,\quad (\text{Using assumption }(b))
        \\
        =& \sum_{j\ne K} p_j (\ell_j + (C-\ell_j)) + p_K (\ell_K - \sum_{j\ne K}(C-\ell_j)),  
        \\
        =& \sum_{j\ne K} p_j \ell_j + \sum_{j\ne K} p_j(C-\ell_j) 
        \\
        &\quad + p_K \ell_K - \sum_{j\ne K}p_K(C-\ell_j),  
        \\
        =& \sum_{j=1}^K p_j \ell_j + \sum_{j\ne K} (p_j-p_K)(C-\ell_j),  
        \\
        \leq & \sum_{j=1}^K p_j \ell_j. \qedhere
        % ,\quad (\text{Using assumptions }(a)\text{ and }(c)).
    \end{align*}
\end{proof}

\universal*
\begin{proof}
    %\textbf{(if)}
    % direct
    % first part
    % Let $\bp$ be an arbitrary probability vector. 
    If $\ell$ is conservative, then we can use \Cref{lemma:earth-moving} with the sorted version of $\bp$ and constants $\{\ell(\widehat{\by},\be_j)\}_{j=1}^K$ to give
    \begin{align}
        \widehat{R}(\widehat{\by};\cS) = \WS\sum_{j=1}^K p_j\ell(\widehat{\by},\be_j) \geq C\WS (1-\|\bp\|_\infty).\label{eq:lb}
    \end{align}
    % second part
    The lower bound in \Cref{eq:lb} can be achieved by choosing $k=\argmax(\bp),\ \widehat{\by}=\be_k$. 
    Indeed, by $(a)$, $(b)$ and $(c)$ from the definition of the conservative loss, we have $\ell(\be_k,\be_k)=0$ and $\ell(\be_k,\be_j)=C,\ j\ne k$, giving 
    \begin{align*}
        \widehat{R}^*(\cS)
        &= \widehat{R}(\be_{k};\cS),
        \\
        &=\WS\sum_{j=1}^K p_j\ell(\be_k,\be_j),
        \\
        &=C\WS\sum_{j\ne k} p_j,
        \\
        &=C\WS (1-\|\bp\|_\infty).
    \end{align*}
    
    %\textbf{(only if)}
    % contrapositive
    % \forall p, A(p) implies B and C
    % equiv
    % not B or not C implies \exist p s.t. not A(p)
    % Fix $\widehat{\by}\in\Delta^{K-1}$. 
    % Fix $\widehat{\by}\in\simplex$ and
    Now we show that \Cref{eq:misclassification_impurity} implies (a) in \Cref{defn:conservative}.
    Assume $(a)$ is false, then there exists $\widehat{y}$ such that
    $\sum_{j=1}^K\ell(\widehat{\by},\be_j) < C(K-1)$. 
    Consider an $\cS$ with its corresponding $\bp$ being the uniform distribution, then, 
    \begin{align*}
        &\quad \widehat{R}^{*}(\cS) \\
        &\le \widehat{R}(\widehat{\by};\cS) 
            = \frac{1}{|\cD|}\sum_{(\bx,\by)\in\cS} \ell(\widehat{\by},\by)\\
        &= \WS \sum_{j=1}^K p_j\ell(\widehat{\by},\be_j)
            = \WS\frac{1}{K}\sum_{j=1}^K \ell(\widehat{\by},\be_j) \\
        &< \WS\frac{1}{K}C(K-1)
            =  C\WS (1-1/K) \\
        &= C \WS (1-\|\bp\|_\infty).
    \end{align*}
    That is, $\widehat{R}^{*}(\cS) < C \WS (1-\|\bp\|_\infty)$, a contradiction.
    Therefore, \Cref{eq:misclassification_impurity} implies (a).
  %   However, \Cref{eq:misclassification_impurity} states that
		% $\min_{\widehat{\by} \in \simplex}\,\widehat{R}(\widehat{\by};\cS)=C\WS(1-\|\bp\|_\infty)$. Having shown the contrapositive, we can infer that if \Cref{eq:misclassification_impurity} holds, then each of $(a)$, $(b)$ and $(c)$ also hold. 
\end{proof}

\section{Proof of \Cref{cor:robust_losses}}
\label{app:cor_robust_losses}
\rl*
\begin{proof}
To show this, we need to show that the conditions of \Cref{thm:universal} are satisfied for each loss function. 
Fix 
$\widehat{\by}\in\simplex$ 
and 
$k\in\{1,\ldots,K\}$. We first check condition $(a)$: 
\begin{align*}
    \ell( & \widehat{\by}  ,\be_k) =
    \\
    &\begin{cases}
        \|\widehat{\by}-\be_k\|_1 = 2-2\widehat{y}_k \leq 2 & \mathrm{MAE},
        \\
        \I(\widehat{\by}\ne \be_k)  \leq 1 & 01,
        \\
        \frac{1}{q}(1-(\be_k^\T\widehat{\by})^q) = \frac{1}{q}(1-\widehat{y}_k^q) \leq \frac{1}{q} & \mathrm{GCE},
        \\
        \|\widehat{\by}-\be_k\|_\infty = \max(1-\widehat{y}_k,\,\underset{j\ne k}{\max}\,\widehat{y}_j)\leq 1 & \mathrm{INF}.
    \end{cases}
\end{align*}
Then for condition $(b)$:
\begin{align*}
    \sum_{j=1}^K & \ell(\widehat{\by},\be_j) = 
        \\
        &\begin{cases}
        \sum_{j=1}^K (2-2\widehat{y}_j) = 2(K-1) & \mathrm{MAE},
        \\
        \sum_{j=1}^K \I(\widehat{\by}\ne \be_j) \geq K-1 & 01,
        \\
        \sum_{j=1}^K \frac{1}{q}(1-\widehat{y}_j^q) 
        \geq 
        % \frac{1}{q}\sum_{j=1}^K(1-\widehat{y}_j)
        % =
        \frac{1}{q}(K-1)   & \mathrm{GCE},
        \\
        \sum_{j=1}^K \max(1-\widehat{y}_j,\,\underset{i\ne j}{\max}\,\widehat{y}_i)
        \geq 
        % \sum_{j=1}^K (1-\widehat{y}_j) 
        % = 
        K-1 & \mathrm{INF}.
    \end{cases}
\end{align*}
Finally condition $(c)$. Note that for $\widehat{\by}=\be_k$, we have $\widehat{y}_k=1$, so,
\[
    \ell(\be_k,\be_k) = 
        \begin{cases}
        (2-2) = 0 & \mathrm{MAE},
        \\
        \I(1=0) = 0 & 01,
        \\
        \frac{1}{q}(1-1^q) =0   & \mathrm{GCE},
        \\
        \max(1-1,\,0) = 0 & \mathrm{INF}.
    \end{cases} \qedhere
\]
\end{proof}

\section{Proof of \Cref{cor:one-hot-predictions}}
\label{app:one-hot-predictions}
\ohp*
\begin{proof}
    If $\ell$ is conservative, then the if-part of the proof of \Cref{thm:universal} shows that the 
    minimum partial impurity is achieved by $\widehat{\by} = \be_{\argmax(\bp)}$, with
    \[
        % \underset{\widehat{\by}\in\simplex}{\min}\,
        \widehat{R}(\be_{\argmax(\bp)};\cS) = C(1-\|\bp\|_\infty).
    \]
    % for $\widehat{\by}=\be_{\argmax(\bp)}$
    As seen in the proof of \Cref{thm:loss_impurity_equivalence} $(a)$, the partial empirical risk with MSE loss takes on its minimum value of the Gini impurity for $\widehat{\by}=\bp$. 
    % since
    % \begin{align*}
    %     \widehat{R}_{\mathrm{MSE}}(\bp;\cS) &= \frac{1}{|\cD|}\sum_{(\bx,\by)\in\cS} \|\bp-\by\|_2^2,
    %     \\
    %     &= \frac{1}{|\cD|} \sum_{(\bx,\by)\in\cS} \|\bp\|_2^2+1-2\bp^\T\by,
    %     \\
    %     &= \WS\|\bp\|_2^2 +\WS -  2\WS\|\bp\|_2^2,
    %     \\
    %     &= \WS(1-\|\bp\|_2^2).
    % \end{align*}
    Similarly for the CE loss, as seen in the proof of \Cref{thm:loss_impurity_equivalence} $(b)$, the partial empirical risk takes on its minimum value of the entropy impurity for $\widehat{\by}=\bp$.
    % , since
    % \begin{align*}
    %     \widehat{R}_{\mathrm{CE}}(\bp;\cS) &= -\frac{1}{|\cD|}\sum_{(\bx,\by)\in\cS} \by^\T\log \bp,
    %     \\
    %     &= -\WS\sum_{j=1}^K p_j\log p_j.
    % \end{align*}
\end{proof}

\section{Proof of \Cref{thm:hoeffding}}
\label{app:hoeffding}

\hoeffding*
\begin{proof}

    The probability that the majority class in $\widetilde{\cS}$ remains the same as in $\cS$ is
    \begin{align*}
        &\; \Pm
        (
        \widetilde{p}_{k^*}\geq\widetilde{p}_{k}
        \text{for all $k\ne k^*$}
        \gvn \cS
        ) 
        \\
        =&\; 1-\Pm
        (
        % \bigcup_{k\ne k^*} 
        % \{
        \widetilde{p}_{k^*} < \widetilde{p}_{k}
        \text{for at least one $k\ne k^*$}
        \gvn \cS
        ). 
    \end{align*}
    By Boole's inequality (i.e., the union bound),
    \begin{align}
        &\; 1-\Pm
        (
        \widetilde{p}_{k^*} < \widetilde{p}_{k}
        \text{for at least one $k\ne k^*$}
        \gvn \cS
        ) \nonumber
        \\
        \geq&\; 1 - \sum_{k\ne k^*} \Pm\left( \widetilde{p}_{k^*} < \widetilde{p}_{k} 
        \gvn \cS
        \right) \nonumber
        \\
        \geq&\; 1 - \sum_{k\ne k^*} \Pm\left( \widetilde{p}_{k} - \widetilde{p}_{k^*} \geq 0  
        \gvn 
        % p_{k^*}>p_{k}
        \cS
        \right).\label{eq:booles}
    \end{align}

    We shall use Hoeffding's inequality to bound each 
    $\Pm\left( \widetilde{p}_{k} - \widetilde{p}_{k^*} \geq 0  
        \gvn 
        % p_{k^*}>p_{k}
        \cS
        \right)$ for each $k \neq k^{*}$, because
    $\widetilde{p}_{k} - \widetilde{p}_{k^*}$ is the average of the 
    independent random variables
    $\{\widetilde{y}_{ik^*} - \widetilde{y}_{ik}\}_{i=1}^n$
    which take values in $[-1, 1]$.
        
    Since the noise is uniform with rate $\eta$, we have
    \begin{align*}
        \widetilde{p}_{k}^{\eta} 
        &= \Em[\widetilde{p}_{k}] 
        = (1-\eta) p_{k} + \frac{\eta}{K-1} \sum_{j \neq k} p_{j} \\
        &= \left(1 - \frac{K\eta}{K-1}\right)p_{j} + \frac{\eta}{K-1}.
    \end{align*}
    In addition, since $\eta < (K-1)/K$, we have for any $k \neq k^{*}$,
    \begin{align*}
        t_{k} := \widetilde{p}_{k^{*}}^{\eta} - \widetilde{p}_{k}^{\eta}
        = \left(1 - \frac{K\eta}{K-1}\right)(p_{k^{*}} - p_{k}) 
        > 0.
    \end{align*}

    Applying Hoeffding's inequality, we have 
    \begin{align}
        &\; \Pm\left( \widetilde{p}_{k} - \widetilde{p}_{k^*} \geq 0 \gvn \cS\right) \\
        =&\; \Pm\left( \widetilde{p}_{k} - \widetilde{p}_{k^*} \geq t_{k} + (\widetilde{p}_{k}^{\eta} - \widetilde{p}_{k^*}^{\eta}) 
        \gvn \cS \right) \nonumber \\
        \leq&\; \exp\left( -nt_{k}^2/2 \right) \nonumber \\
        \leq&\; \exp\left( -n \gamma^{2}/2 \right), \label{eq:lower}
    \end{align}
    where $\gamma = \underset{k\ne k^*}{\min}\,(p_{k^*} - p_{k})$ is the margin.

    Plugging the bound in \Cref{eq:lower} into \Cref{eq:booles}, we have
    \begin{align*}
        \Pm(\widetilde{p}_{k^*}\geq \widetilde{p}_{k} \text{ for all $k \neq k^{*}$}
        \gvn \cS) 
        &\geq 1-(K-1) e^{-n\gamma^2/2}.\qedhere
    \end{align*}

\end{proof}

\section{Proof of \Cref{thm:early_stopping}}
\label{app:early_stopping}
The proof of \Cref{thm:early_stopping} relies on the result in \Cref{lemma:max-norm-equality}.

\begin{restatable}[]{lemma}{maxnormequality}
    \label{lemma:max-norm-equality} 
    Let $\bx=(x_1,\ldots,x_d)^\T,\by=(y_1,\ldots,y_d)^\T\in \R_{\geq 0}^d$ be $d$-dimensional vectors of nonnegative real numbers and 
    $\sM_{\bz}=\{j:z_j=\|\bz\|_\infty\}$ 
    be the set of indices corresponding to maximal elements of $\bz$, for any $\bz\in \R_{\geq 0}^d$. Then, 
    \[
        \|\bx+\by\|_\infty = \|\bx\|_\infty + \|\by\|_\infty
    \]
    if and only if 
    \[
        \sM_{\bx+\by} = \sM_{\bx}\cap \sM_{\by}.
    \]
\end{restatable}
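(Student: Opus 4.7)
The plan is to prove both directions of the biconditional by exploiting the trivial componentwise bound $x_j + y_j \le \|\bx\|_\infty + \|\by\|_\infty$ together with the fact that this upper bound is itself attained precisely when there is an index at which both $\bx$ and $\by$ attain their maxima.

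For the forward direction, I would assume $\|\bx+\by\|_\infty = \|\bx\|_\infty + \|\by\|_\infty$. To show $\sM_{\bx+\by} \subseteq \sM_{\bx} \cap \sM_{\by}$, I take any $j \in \sM_{\bx+\by}$, so $x_j + y_j = \|\bx+\by\|_\infty = \|\bx\|_\infty + \|\by\|_\infty$; combining this with $x_j \le \|\bx\|_\infty$ and $y_j \le \|\by\|_\infty$ forces both inequalities to be equalities, giving $j \in \sM_{\bx} \cap \sM_{\by}$. For the reverse inclusion, any $j \in \sM_{\bx} \cap \sM_{\by}$ satisfies $x_j + y_j = \|\bx\|_\infty + \|\by\|_\infty = \|\bx+\by\|_\infty$, so $j \in \sM_{\bx+\by}$.

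For the converse direction, I would assume $\sM_{\bx+\by} = \sM_{\bx} \cap \sM_{\by}$. Since $\bx + \by$ is a finite-dimensional vector, $\sM_{\bx+\by}$ is nonempty, so the intersection $\sM_{\bx} \cap \sM_{\by}$ is also nonempty. Picking any $j$ in this common set gives $x_j = \|\bx\|_\infty$, $y_j = \|\by\|_\infty$, and $x_j + y_j = \|\bx+\by\|_\infty$, yielding the desired norm identity.

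There is no real obstacle here: the result is essentially the statement that the triangle-type bound $\|\bx+\by\|_\infty \le \|\bx\|_\infty + \|\by\|_\infty$ (which holds for nonnegative $\bx,\by$) is tight exactly when the argmax sets overlap. The only subtlety to flag is ensuring nonemptiness of the intersection when deriving the norm identity from the set equality, which follows automatically because $\sM_{\bx+\by}$ is nonempty.
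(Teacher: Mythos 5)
Your proof is correct and follows essentially the same route as the paper's: both directions hinge on the observation that $x_j+y_j\le\|\bx\|_\infty+\|\by\|_\infty$ componentwise with equality exactly when $j$ maximizes both vectors. Your version is if anything slightly cleaner (the paper rearranges $x_k+y_k=x_i+y_j$ into a sign argument, while you tighten the two inequalities directly), and you correctly flag the nonemptiness of $\sM_{\bx+\by}$, which the paper leaves implicit.
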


\begin{proof}
    \textbf{(if)}
    Assume $\sM_{\bx+\by}=\sM_{\bx} \cap \sM_{\by}$, and let $i\in \sM_{\bx+\by}$. Then, $i\in \sM_{\bx}$, $i\in \sM_{\by}$, and
    \begin{align*}
        \|\bx+\by\|_\infty &= x_i+y_i,
        \\
        \|\bx\|_\infty &= x_i,
        \\
        \|\by\|_\infty &= y_i.
    \end{align*}
    Thus, $\|\bx+\by\|_\infty = \|\bx\|_\infty + \|\by\|_\infty$.
    
    \textbf{(only if)}
    Assume $\|\bx+\by\|_\infty = \|\bx\|_\infty + \|\by\|_\infty$, and let 
    $i \in \mathscr{M}_{\bx}$, $j \in \mathscr{M}_{\by}$, $k \in \mathscr{M}_{\bx+\by}$. 
    % \begin{align*}
    %     j 
    %     % &= \underset{i=1,\ldots,d}{\argmax}\,|x_i|,
    %     &\in \mathscr{M}_{\bx},
    %     \\
    %     k 
    %     % &= \underset{i=1,\ldots,d}{\argmax}\,|y_i|,
    %     &\in \mathscr{M}_{\by},
    %     \\
    %     l 
    %     % &= \underset{i=1,\ldots,d}{\argmax}\,|x_i+y_i|.
    %     &\in \mathscr{M}_{\bx+\by}.
    % \end{align*}
     Then, by assumption, we have
     \begin{align*}
         x_k+y_k = x_i+y_j,
     \end{align*}
     if and only if 
     \begin{align}
         x_k-x_i = y_j - y_k. \label{eq:compare_maxs}
     \end{align}
     The LHS of \Cref{eq:compare_maxs} is at most zero, and the RHS of \Cref{eq:compare_maxs} is at least zero. We achieve equality if and only if both sides are zero, which happens when 
     $k\in\sM_{\bx}\cap \sM_{\by}$. 
     Since $k$ is an arbitrary element of 
     $\sM_{\bx+\by}$, 
     we must have 
     $\sM_{\bx+\by}\subseteq \sM_{\bx}\cap \sM_{\by}$. 
     If we let 
     $l\in \sM_{\bx}\cap \sM_{\by}$, 
     then by analogous reasoning we must have 
     $l\in\sM_{\bx+\by}$. 
     Therefore, 
     \[
        \sM_{\bx+\by}=\sM_{\bx}\cap\sM_{\by}. \qedhere
     \]
\end{proof}

\es*

\begin{proof}
    {\bf (a)}
    Consider a conservative loss $\ell$ with associated positive constant $C$,
    a node containing a set of examples $\cS$,
    and an arbitrary split $(f, t)$.
    Let $\bp_{\cS}$ be the vector of proportions of data in $\cS$ from each class,
    and $\bm{n}_{\cS}$ be the vector containing the number of examples in $\cS$ from each class. 
    % Then, the risk reduction is 
    % \begin{align*}
    %     \mathrm{RR}(f,t;\cS) =\ & C\WS (1-\|\bp_{\cS}\|_\infty)
    %     \\& - C W_{\cS_{f\leq t}} (1-\|\bp_{\cS_{f\leq t}}\|_\infty)
    %     \\& - C W_{\cS_{f> t}} (1-\|\bp_{\cS_{f > t}}\|_\infty).
    % \end{align*}
    Then, we have $\RR(f,t;\cS) \leq 0 $ if and only if
    \begin{align*}
        C\WS (1-\|\bp_{\cS}\|_\infty)  
        \leq\ & C W_{\cS_{f\leq t}} (1-\|\bp_{\cS_{f\leq t}}\|_\infty)
        \\ & + C W_{\cS_{f> t}} (1-\|\bp_{\cS_{f > t}}\|_\infty),
    \end{align*}
    which is equivalent to
    \begin{align*}
        |\cS| (1-\|\bp_{\cS}\|_\infty) \leq &\ 
         |\cS_{f\leq t}| (1-\|\bp_{\cS_{f\leq t}}\|_\infty)
        \\ &+|\cS_{f> t}| (1-\|\bp_{\cS_{f > t}}\|_\infty).
    \end{align*}
    Using $|\cS_{f\leq t}| + |\cS_{f > t}| = |\cS|$ gives:
    \begin{align}
         % |\cS| \|\bp_{\cS}\|_\infty &\geq 
         %  |\cS_{f\leq t}| \|\bp_{\cS_{f\leq t}}\|_\infty
         % + |\cS_{f> t}|\|\bp_{\cS_{f > t}}\|_\infty,\nonumber
         % \\
         \|\bm{n}_{\cS}\|_\infty &\geq
           \|\bm{n}_{\cS_{f\leq t}}\|_\infty
         + \|\bm{n}_{\cS_{f > t}}\|_\infty.\label{eq:mi_es}
    \end{align}
    However, it is always true that $\|\bm{n}_{\cS}\|_\infty \leq \|\bm{n}_{\cS_{f\leq t}}\|_\infty + \|\bm{n}_{\cS_{f > t}}\|_\infty$ by the triangle inequality, hence we stop splitting when equality in \eqref{eq:mi_es} is achieved. 
    By \Cref{lemma:max-norm-equality}, this happens only when the majority classes at the parent node
    are also the majority classes at both child nodes for the split $(f, t)$.
    % We can apply \Cref{lemma:max-norm-equality} with vectors $\bn_{\cS_{f\leq t}},\bn_{\cS_{f\leq t}}$ and sets $\cM_{\cS_{f\leq t}},\cM_{\cS_{f > t}}$ to say 
    % \[
    %     \|\bm{n}_{\cS}\|_\infty =
    %        \|\bm{n}_{\cS_{f\leq t}}\|_\infty
    %      + \|\bm{n}_{\cS_{f > t}}\|_\infty 
    % \]
    % if and only if
    % \[
    %      \cM_{\cS} = \cM_{\cS_{f\leq t}} \cap \cM_{\cS_{f > t}}. 
    % \]

    \medskip{\bf (b)}
    For the MSE, it was shown in \cite{breiman1984classification} that $\RR_{\mathrm{MSE}}(f,t;\cS)\leq 0$ only when $\bp_{\cS}=\bp_{\cS_{f\leq t}}=\bp_{\cS_{f > t}}$ since $\widehat{R}_{\mathrm{MSE}}^*$, 
    when viewed as a function of $\bp_{\cS}$, 
    is strictly concave.
    
    Since $\widehat{R}^*$ is also strictly concave for both the CE and GCE $(q\in(0,1))$ loss when viewed as a function of $\bp_{\cS}$, the same result holds as for MSE.
    % By the same argument as for MSE, we have $\RR(f,t;\cS)\leq 0$ only when each of the parent and two child nodes have the same label distribution for the CE and GCE $(q\in(0,1))$ loss, since in both cases the minimum partial empirical risk is strictly concave.
    % For each of the MSE, CE and GCE $(q\in(0,1))$, the minimum partial empirical risk $\widehat{R}^*(\cS)$ when viewed as a function of $\bp$ is (strictly) concave. 
    % Indeed, the hessian matrix of $\widehat{R}_{\mathrm{MSE}}^*(\cS)$ is $-2\mathbf{I}$, and the hessian matrix of $\widehat{R}_{\mathrm{CE}}^*(\cS)$ is $-\mathrm{diag}((1/ p_k)_{k=1}^K)$, both of which are negative definite. 
    % For GCE $(q\in(0,1))$, we can use Minkowski's inequality to show $\|\bp\|_{1/(1-q)}$ is convex, indeed, for any $\alpha\in[0,1]$ and $\bp_L,\bp_R\in\cP$,
    % \begin{align*}
    %     & \|\alpha\bp_L + (1-\alpha)\bp_R\|_{1/(q-1)}
    %     \\
    %     & \leq \alpha\|\bp_L\|_{1/(q-1)} 
    %      + (1-\alpha)\|\bp_R\|_{1/(q-1)},
    % \end{align*}
    % with equality if and only if $\bp_L=\bp_R$. Hence, $\widehat{R}_{\mathrm{GCE}}^*(\cS)$ is also (strictly) concave.
    % By concavity, for the three aforementioned loss functions, $\RR(f,t;\cS)\geq 0$, with equality if and only if the two child nodes have the same label distributions. 
\end{proof}

\section{Proof of \Cref{lem:distloss}}
\distloss*
\begin{proof}
    % The cdf of the $\ber(p)$ distribution is 
    % \begin{align*}
    %     F(x)=(1-p)\I(x\in[0,1))+\I(x\geq 1).
    % \end{align*}
    %For $p=0$ the cdf simplifies to 
    The CDF of the $\ber(0)$ distribution is
    \begin{align*}
        F(x)=\I(x\geq 0)=(1-\mathrm{sign}(-x))/2.
    \end{align*}
    The loss function is thus 
    \begin{align*}
        \ell(z)=F(-z)=(1-\mathrm{sign}(z))/2,
    \end{align*}
    which is the 01 loss.

    \medskip
    The CDF of the Logistic distribution with location 0 and scale 1 is 
    \begin{align*}
        F(x)=1/(1+\exp(-x)).
    \end{align*} 
    The corresponding loss function is 
    \begin{align*}
        \ell(z)=F(-z)=1/(1+\exp(x)),
    \end{align*} 
    which is the sigmoid loss.

    \medskip
    The CDF of the $\mathsf{U}(-1,1)$ distribution is 
    \begin{align*}
        F(x) 
        &=\frac{x+1}{2}\I(x\in [-1,1])+\I(x>1) \\
        &=\max\{0,\min\{1,(1+z)/2\}\}.
    \end{align*} 
    The corresponding loss function is 
    \begin{align*}
        \ell(z)=F(-z)=\max\{0,\min\{1,(1-z)/2\}\},
    \end{align*}
    which is the ramp loss.
\end{proof}

\section{Proof of \Cref{lem:neloss}}
\neloss*
\begin{proof}
    Let $X$ be a random variable having an exponential distribution with scale $\beta$. Then the CDF of $X$ is
    $F(x)=\max\{0,1-\exp(-x/\beta)\}$.
    The CDF of its shifted negative $Z=\mu-X$ is
    \begin{align}
        F(z) &= 1-\max\{0,1-\exp(-(\mu-z)/\beta)\}\nonumber
        \\
        &=\min\{1,\exp((z-\mu)/\beta)\}.\label{eq:ycdf}
    \end{align}
    Using $\beta=1$ and plugging $z \gets -y\hat{y}$ into \Cref{eq:ycdf} gives the result. 
\end{proof}

\section{Proof of \Cref{thm:negexp}}
\label{app:NE-loss}
\negexp*

\begin{figure*}[t]
    \centering
    \includegraphics[width=14cm]{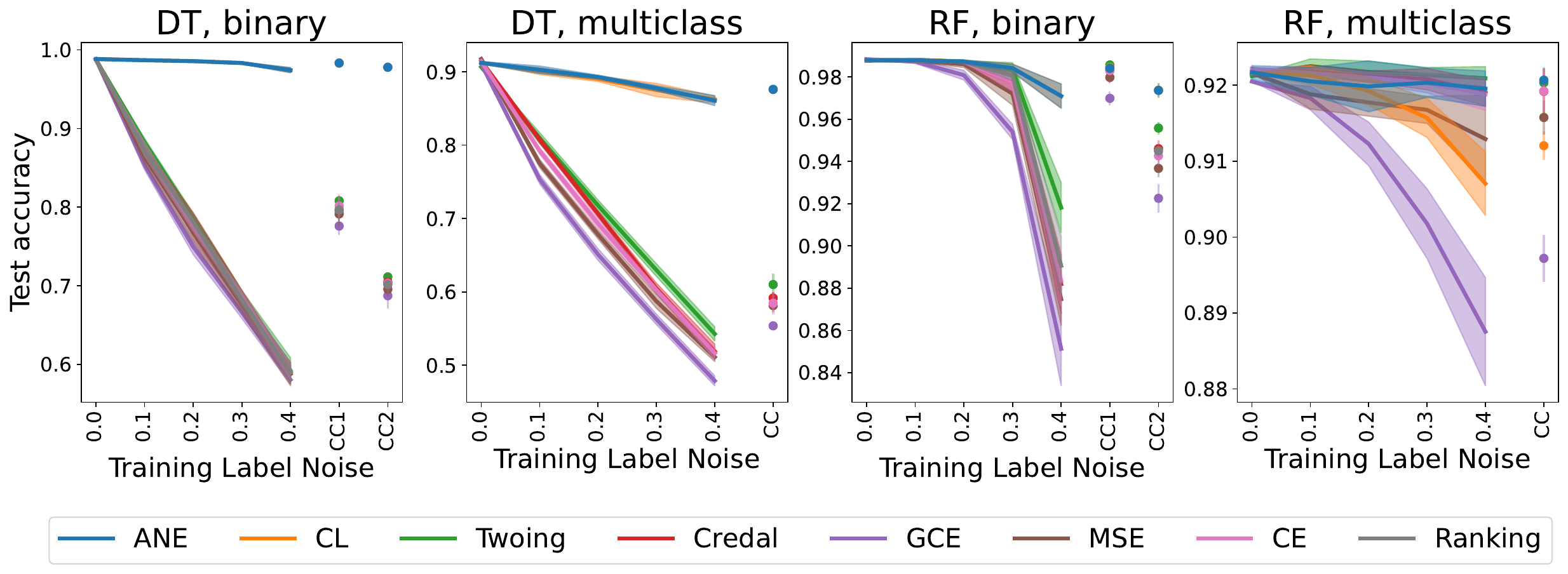} 
    \caption{Mean test accuracy with 2x standard deviation bands for decision tree and random forest with different splitting criteria. 
    Training labels corrupted using uniform noise $\eta\in\{0.0,0.1,0.2,0.3,0.4\}$ and class conditional noise CC1, CC2 for binary classification and CC for multiclass classification. 
    }
    \label{fig:pre_trained}
\end{figure*}

\begin{proof}
    Simplifying the partial empirical risk by grouping together the positive and negative samples gives
    \begin{align*}
    &\; \widehat{R}(\widehat{y};\cS) \\
        =&\; \sum_{(\bx,y)\in\cS} \min\{1, e^{-\widehat{y}y-\mu)}\}/|\cD| \\
        =&\; \WS \left( 
                p_{-} \min\{1, e^{\widehat{y}-\mu}\}
                +
                p_{+} \min\{1,e^{-\widehat{y}-\mu}\}
                \right) \\
        =&\;
        \begin{cases} 
            \WS \left(p_{-} e^{\widehat{y}-\mu} + p_{+} \right), 
                & \widehat{y}\leq -\mu, \\
            \WS \left(p_{-} e^{\widehat{y} - \mu} + p_{+} e^{-\widehat{y}-\mu}\right), 
                & -\mu<\widehat{y}<\mu, \\
            \WS \left(p_{-} + p_{+} e^{-\widehat{y}-\mu}\right), 
                & \widehat{y}\geq \mu.
            \end{cases} 
    \end{align*}
    We consider the above three cases separately.
    \begin{itemize}
        \item When $\widehat{y} \leq -\mu$, the infimum is $p_{+}$, achieved when $\widehat{y} = -\infty$.
        \item When $-\mu<\widehat{y}<\mu$, using the fact that $a + b \ge 2\sqrt{ab}$ for
            $a, b \ge 0$ with equality when $a = b$, we have 
            \begin{align*}
                p_{-} e^{\widehat{y} - \mu} + p_{+} e^{-\widehat{y}-\mu}
                %\ge 2 \sqrt{p_{-} e^{\widehat{y} - \mu} p_{+} e^{-\widehat{y}-\mu}}
                \ge 2 e^{-\mu} \sqrt{p_{-} p_{+}}.
            \end{align*}
            The infimum is obtained when 
            $p_{-} e^{\widehat{y} - \mu} = p_{+} e^{-\widehat{y}-\mu}$.
        \item
            When $\widehat{y} \geq \mu$, the infimum is $p_{-}$, achieved when
            $\widehat{y} = +\infty$.
    \end{itemize}

    Therefore we can write
    \begin{align*}
        \widehat{R}^*(\cS) &= \min\{p,1-p,2e^{-\mu}\sqrt{p(1-p)}\}
        \\
        &= \min\left\{1-\|\bp\|_\infty,\, \lambda\sqrt{(1-\|\bp\|_2^2)/2} \right\}. \qedhere
    \end{align*}
\end{proof}

\section{Class Conditional Noise}
\label{app:class-conditional-noise}

% Method for corrupting labels follows principle: data points from different classes but with similar feature values are more difficult to distinguish. 

The method used for corrupting labels with class conditional noise in multiclass classification problems is based on the Mahalanobis distance between classes.
Let $\overline{\bx}_1$ and $\overline{\bx}_2$ be the sample mean vectors of examples in classes 1 and 2, respectively, 
and $\mathbf{S}$ be the estimated common covariance matrix computed using feature vectors for the two classes. 
Then, the Mahalanobis distance between class 1 and 2 is defined to be 
$\sqrt{(\overline{\bx}_1-\overline{\bx}_2)^\T \mathbf{S}^{-1} (\overline{\bx}_1-\overline{\bx}_2)}$.
% In summary, two classes that are similar (in terms of Mahalanobis distance) to each other will have 

% samples from classes that are similar to many other classes will have their labels corrupted often, with higher chances of being mislabeled as belonging to similar classes. 

Transition probabilities on the off diagonals of $(\eta_{ij})_{i,j=1}^K$ are larger for pairs of classes with high similarity (in terms of Mahalanobis distance), and transition probabilities on the diagonals are smaller for those classes whose total similarity to every other class is high.
Pseudocode for the algorithm used to generate noise rates is given in \Cref{alg:ccn}. 
% A brief summary of how the algorithm works and the nature of the transition probabilities between classes:
% \begin{itemize}
%     \item The Mahalanobis distance between each class is computed and turned into a similarity measure by taking the reciprocal.
%     \item Classes are ranked by total similarity to every other class, with the class most similar to the others receiving a score of 0.5 and the least similar class receiving a score of 0.9. The other classes scores are interpolated linearly between 0.5 and 0.9 based on their total similarities to every other class.
%     \item Theses scores become the probabilities of an example from each class receiving the correct label (diagonal of transition matrix).
%     \item The off diagonal entries of the transition matrix are the similarity measures between the respective classes scaled to ensure that the row sums of the transition matrix are one. 
% \end{itemize}

% With these noise rates, classes that are more similar to each other are more likely to have their labels switched, and classes that are more similar to every other class are more likely to be corrupted in the first place.

\begin{algorithm}[h]
\caption{Class Conditional Noise}
\label{alg:ccn}
\textbf{Input:} Data $\{(\bx_i,\by_i)\}_{i=1}^n$.\\ 
\textbf{Output:} Transition probabilities $(\eta_{ij})_{i,j=1}^K$.

% \begin{algorithmic}[1]
% \STATE Initialize similarity matrix $(s_{ij})_{i,j=1}^K$.\\
% \FOR{$i=1,\ldots,K$}
%     \FOR{$j=1,\ldots,K$}
%         \IF{$i\ne j$}
%             \STATE $d\gets$ Mahalanobis distance between class $i$ and class $j$\\
%             \STATE $s_{ij}\gets 1/d$\\
%         \ENDIF
%     \ENDFOR
% \ENDFOR
% \STATE $(weights)_{i=1}^K\gets $ row sums of $(1/s_{ij})_{i,j=1}^K$\\
% \STATE $(diags)_{i=1}^K\gets$ scaled $(weights)_{i=1}^K$ between 0.5 and 0.9\\
% \FOR{$i=1,\ldots,K$}
%     \STATE scale row $i$ of the similarity matrix $(s_{ij})_{i,j=1}^K$ so that the row sum is $1-diag_i$\\
% \ENDFOR
% \STATE $(\eta_{ij})_{i,j=1}^K\gets (s_{ij})_{i,j=1}^K+\mathrm{diag}((diags)_{i=1}^K)$\\
% \STATE \textbf{Return}: Transition probabilities $(\eta_{ij})_{i,j=1}^K$
% \end{algorithmic}
\begin{algorithmic}[1]
\STATE Compute $s_{ij}$, the \emph{inverse} Mahalanobis distance between class $i$ and class $j$, for all $i \neq j$.
\STATE Compute $s_{ii} = \sum_{j \neq i} 1/s_{ij}$ for each class $i$.
\STATE Scale $s_{ii}$ to range from 0.5 to 0.9.
\STATE For all $i, j$, set $\eta_{ij}$ to $\begin{cases}
    s_{ij}/(1 - s_{ii}),  & i \neq j,\\
    s_{ii}, & i = j.
    \end{cases}$.
\STATE \textbf{Return}: Transition probabilities $(\eta_{ij})_{i,j=1}^K$
\end{algorithmic}
\end{algorithm}

\section{Pre-trained Image Embeddings CIFAR-10}
\label{app:cifar10-pretrained}

We performed additional experiments on CIFAR-10 using pre-trained image embeddings from VGG11\footnote{\url{https://github.com/huyvnphan/PyTorch_CIFAR10}}. 
The input to the last classifier module in the network was taken to be the new feature vectors.
Other processing steps are the same as in the other experiments.

From 
\Cref{fig:pre_trained}
we see that predictive performance of all tree methods generally improve compared to using the raw pixel values as in \Cref{fig:dt-bin,fig:dt-mc,fig:rf-bin,fig:rf-mc}. 
ANE still shows strong performance, 
generally performing the best across all noise settings,
and particularly with large amounts of noise.
% Conservative loss performs similarly to ANE in most of these experiments,  

\section{Detailed Experiment Results}
\label{app:results}

\Cref{tab:dt-bin,tab:dt-mc,tab:rf-bin,tab:rf-mc} present the average accuracies and 2 standard deviations 
used in \Cref{fig:dt-bin,fig:dt-mc,fig:rf-bin,fig:rf-mc}. 
Best performers (and equivalent) are highlighted for each dataset and noise setting.

% Methods giving the largest mean accuracy and those with overlapping 97.5\% confidence interval are highlighted for each dataset and noise setting.

\begin{table*}[t] 
	\scriptsize 
	\centering 
	\begin{tabular}{clccccccc} 
&& \multicolumn{5}{c}{Uniform Noise} & \multicolumn{2}{c}{Class Conditional} \\ \cline{3-9} 
Dataset & Split Crit. & $0.0$ & $0.1$ & $0.2$ & $0.3$ & $0.4$ & $(0.1,0.3)$ & $(0.2,0.4)$  \\ \hline 
\multirow{8}{*}{MNIST}
 & ANE    & $\mathbf{94.00 \pm 0.48}$ & $\mathbf{89.92 \pm 0.84}$ & $\mathbf{87.89 \pm 1.37}$ & $\mathbf{84.13 \pm 3.35}$ & $\mathbf{82.03 \pm 1.39}$ & $\mathbf{82.20 \pm 1.35}$ & $\mathbf{77.80 \pm 1.65}$ \\ 
 & CL     & $88.14 \pm 0.08$ & $86.94 \pm 2.88$ & $\mathbf{86.07 \pm 3.04}$ & $\mathbf{84.76 \pm 2.42}$ & $\mathbf{81.56 \pm 2.06}$ & $\mathbf{82.41 \pm 1.19}$ & $\mathbf{79.03 \pm 2.49}$ \\ 
 & Twoing & $90.34 \pm 0.48$ & $82.99 \pm 1.00$ & $75.31 \pm 1.52$ & $66.66 \pm 0.92$ & $57.98 \pm 1.28$ & $75.09 \pm 0.49$ & $67.11 \pm 0.56$ \\ 
 & Credal & $\mathbf{94.39 \pm 0.31}$ & $84.23 \pm 0.24$ & $74.86 \pm 0.51$ & $66.52 \pm 1.30$ & $58.18 \pm 0.54$ & $75.56 \pm 1.80$ & $66.70 \pm 1.35$ \\ 
 & GCE    & $93.17 \pm 0.06$ & $83.04 \pm 1.16$ & $74.22 \pm 0.66$ & $65.88 \pm 0.44$ & $58.13 \pm 0.48$ & $74.53 \pm 1.08$ & $66.20 \pm 1.65$ \\ 
 & MSE    & $94.08 \pm 0.13$ & $83.59 \pm 0.89$ & $74.60 \pm 0.50$ & $66.14 \pm 1.05$ & $57.70 \pm 1.13$ & $75.54 \pm 0.89$ & $66.56 \pm 1.06$ \\ 
 & CE     & $\mathbf{94.47 \pm 0.21}$ & $84.10 \pm 0.64$ & $74.85 \pm 0.90$ & $66.51 \pm 0.93$ & $58.19 \pm 0.65$ & $75.61 \pm 0.84$ & $66.86 \pm 0.94$ \\ 
 & Ranking & $94.02 \pm 0.11$ & $83.75 \pm 0.69$ & $74.93 \pm 0.56$ & $66.12 \pm 1.09$ & $57.98 \pm 0.91$ & $75.50 \pm 0.51$ & $66.44 \pm 1.50$ \\ 
\hline
\multirow{8}{*}{20News}
 & ANE    & $\mathbf{75.31 \pm 1.10}$ & $\mathbf{74.30 \pm 1.26}$ & $72.23 \pm 0.74$ & $\mathbf{69.06 \pm 1.77}$ & $\mathbf{59.75 \pm 8.84}$ & $\mathbf{69.12 \pm 1.36}$ & $\mathbf{62.38 \pm 2.96}$ \\ 
 & CL     & $\mathbf{75.76 \pm 0.20}$ & $\mathbf{74.68 \pm 0.44}$ & $\mathbf{73.27 \pm 0.48}$ & $\mathbf{69.88 \pm 1.33}$ & $\mathbf{64.54 \pm 2.06}$ & $\mathbf{69.50 \pm 1.57}$ & $\mathbf{63.29 \pm 1.81}$ \\ 
 & Twoing & $72.75 \pm 0.97$ & $68.42 \pm 1.14$ & $64.07 \pm 0.66$ & $60.20 \pm 1.22$ & $54.75 \pm 1.29$ & $62.80 \pm 1.47$ & $58.25 \pm 1.30$ \\ 
 & Credal & $74.92 \pm 0.24$ & $69.75 \pm 0.66$ & $64.40 \pm 1.34$ & $59.68 \pm 0.96$ & $55.69 \pm 1.62$ & $63.59 \pm 0.85$ & $58.44 \pm 1.06$ \\ 
 & GCE    & $72.41 \pm 1.05$ & $67.63 \pm 1.51$ & $62.94 \pm 1.25$ & $58.44 \pm 1.16$ & $54.65 \pm 2.08$ & $61.85 \pm 0.83$ & $57.50 \pm 1.59$ \\ 
 & MSE    & $74.94 \pm 0.40$ & $69.05 \pm 1.50$ & $64.52 \pm 0.57$ & $59.68 \pm 2.79$ & $54.94 \pm 1.60$ & $63.33 \pm 1.08$ & $57.81 \pm 1.03$ \\ 
 & CE     & $\mathbf{75.84 \pm 0.40}$ & $70.02 \pm 0.40$ & $65.56 \pm 1.02$ & $59.66 \pm 1.11$ & $54.82 \pm 0.41$ & $63.82 \pm 1.24$ & $58.61 \pm 1.38$ \\ 
 & Ranking & $74.51 \pm 0.71$ & $69.37 \pm 0.63$ & $64.56 \pm 0.71$ & $59.61 \pm 1.10$ & $54.96 \pm 1.78$ & $63.89 \pm 1.20$ & $58.68 \pm 1.73$ \\ 
\hline
\multirow{8}{*}{UNSW-NB15}
 & ANE    & $\mathbf{85.90 \pm 1.49}$ & $\mathbf{86.28 \pm 0.21}$ & $\mathbf{81.47 \pm 0.23}$ & $\mathbf{80.89 \pm 0.31}$ & $\mathbf{80.45 \pm 2.40}$ & $\mathbf{88.12 \pm 0.68}$ & $\mathbf{83.05 \pm 4.23}$ \\ 
 & CL     & $80.91 \pm 0.06$ & $80.97 \pm 0.19$ & $80.87 \pm 0.17$ & $\mathbf{80.83 \pm 0.36}$ & $\mathbf{80.86 \pm 0.34}$ & $81.33 \pm 0.39$ & $\mathbf{82.98 \pm 1.37}$ \\ 
 & Twoing & $\mathbf{86.60 \pm 0.36}$ & $80.24 \pm 2.62$ & $75.65 \pm 1.32$ & $67.06 \pm 3.50$ & $58.73 \pm 2.62$ & $72.56 \pm 3.70$ & $64.64 \pm 1.59$ \\ 
 & Credal & $\mathbf{86.78 \pm 0.12}$ & $83.72 \pm 0.77$ & $78.69 \pm 0.95$ & $70.88 \pm 2.04$ & $59.90 \pm 2.03$ & $77.06 \pm 3.05$ & $65.44 \pm 2.60$ \\ 
 & GCE    & $86.25 \pm 0.11$ & $78.64 \pm 1.74$ & $71.75 \pm 3.22$ & $64.66 \pm 3.46$ & $57.45 \pm 1.76$ & $70.63 \pm 4.26$ & $63.44 \pm 1.63$ \\ 
 & MSE    & $86.30 \pm 0.13$ & $79.60 \pm 1.02$ & $73.04 \pm 1.32$ & $66.31 \pm 2.58$ & $57.02 \pm 4.04$ & $71.02 \pm 4.01$ & $63.96 \pm 2.79$ \\ 
 & CE     & $\mathbf{86.57 \pm 0.23}$ & $79.19 \pm 4.65$ & $73.60 \pm 1.08$ & $67.59 \pm 2.67$ & $59.64 \pm 3.93$ & $72.96 \pm 1.00$ & $64.50 \pm 3.25$ \\ 
 & Ranking & $85.73 \pm 0.30$ & $79.88 \pm 1.01$ & $74.29 \pm 2.07$ & $67.40 \pm 0.87$ & $59.23 \pm 2.08$ & $73.57 \pm 2.34$ & $64.85 \pm 1.73$ \\ 
\hline
\multirow{8}{*}{Mushrooms}
 & ANE    & $\mathbf{100.00 \pm 0.00}$ & $\mathbf{99.93 \pm 0.13}$ & $\mathbf{99.72 \pm 0.67}$ & $\mathbf{99.54 \pm 0.84}$ & $\mathbf{98.07 \pm 2.91}$ & $\mathbf{99.94 \pm 0.17}$ & $\mathbf{97.86 \pm 1.88}$ \\ 
 & CL     & $99.94 \pm 0.00$ & $\mathbf{99.95 \pm 0.10}$ & $\mathbf{99.93 \pm 0.10}$ & $\mathbf{99.63 \pm 0.53}$ & $\mathbf{98.04 \pm 1.54}$ & $\mathbf{99.77 \pm 0.18}$ & $\mathbf{98.54 \pm 1.08}$ \\ 
 & Twoing & $\mathbf{100.00 \pm 0.00}$ & $89.62 \pm 1.36$ & $80.78 \pm 1.19$ & $70.14 \pm 3.35$ & $60.04 \pm 1.27$ & $79.16 \pm 2.36$ & $69.22 \pm 2.24$ \\ 
 & Credal & $\mathbf{100.00 \pm 0.00}$ & $\mathbf{99.70 \pm 0.36}$ & $95.15 \pm 4.70$ & $83.37 \pm 2.45$ & $65.96 \pm 4.27$ & $89.60 \pm 4.33$ & $78.82 \pm 2.33$ \\ 
 & GCE    & $\mathbf{100.00 \pm 0.00}$ & $86.42 \pm 2.12$ & $76.18 \pm 3.14$ & $66.78 \pm 3.17$ & $58.35 \pm 3.88$ & $76.17 \pm 3.07$ & $67.20 \pm 1.29$ \\ 
 & MSE    & $\mathbf{100.00 \pm 0.00}$ & $84.82 \pm 1.63$ & $77.85 \pm 2.99$ & $67.34 \pm 2.44$ & $59.13 \pm 2.54$ & $75.94 \pm 1.43$ & $66.74 \pm 1.62$ \\ 
 & CE     & $\mathbf{100.00 \pm 0.00}$ & $86.54 \pm 2.68$ & $76.80 \pm 2.49$ & $66.79 \pm 3.66$ & $58.86 \pm 0.50$ & $76.32 \pm 1.37$ & $66.44 \pm 1.85$ \\ 
 & Ranking & $\mathbf{100.00 \pm 0.00}$ & $86.51 \pm 3.01$ & $75.69 \pm 2.50$ & $67.06 \pm 2.34$ & $57.53 \pm 3.64$ & $75.78 \pm 2.13$ & $65.86 \pm 2.85$ \\ 
\hline
\multirow{8}{*}{Covertype}
 & ANE    & $94.85 \pm 0.04$ & $\mathbf{89.43 \pm 0.22}$ & $\mathbf{85.85 \pm 0.39}$ & $\mathbf{79.62 \pm 0.58}$ & $\mathbf{76.38 \pm 0.37}$ & $\mathbf{81.21 \pm 0.33}$ & $\mathbf{71.65 \pm 0.52}$ \\ 
 & CL     & $77.22 \pm 0.01$ & $77.19 \pm 0.22$ & $77.13 \pm 0.30$ & $76.90 \pm 0.41$ & $\mathbf{77.01 \pm 1.19}$ & $73.70 \pm 0.57$ & $69.18 \pm 0.54$ \\ 
 & Twoing & $85.35 \pm 0.27$ & $79.23 \pm 0.40$ & $72.34 \pm 0.50$ & $65.29 \pm 0.44$ & $57.65 \pm 0.22$ & $72.23 \pm 0.23$ & $65.05 \pm 0.46$ \\ 
 & Credal & $95.26 \pm 0.06$ & $88.37 \pm 0.34$ & $78.32 \pm 0.19$ & $68.30 \pm 0.53$ & $58.87 \pm 0.26$ & $77.28 \pm 0.05$ & $67.81 \pm 0.32$ \\ 
 & GCE    & $93.86 \pm 0.04$ & $84.11 \pm 0.43$ & $75.00 \pm 0.40$ & $66.12 \pm 0.34$ & $57.84 \pm 0.26$ & $74.84 \pm 0.16$ & $66.03 \pm 0.37$ \\ 
 & MSE    & $95.07 \pm 0.08$ & $84.88 \pm 0.21$ & $75.38 \pm 0.24$ & $66.46 \pm 0.22$ & $58.01 \pm 0.13$ & $75.47 \pm 0.28$ & $66.48 \pm 0.44$ \\ 
 & CE     & $\mathbf{95.47 \pm 0.03}$ & $85.17 \pm 0.32$ & $75.71 \pm 0.21$ & $66.80 \pm 0.40$ & $58.07 \pm 0.26$ & $75.68 \pm 0.26$ & $66.61 \pm 0.33$ \\ 
 & Ranking & $95.04 \pm 0.04$ & $84.86 \pm 0.25$ & $75.68 \pm 0.38$ & $66.85 \pm 0.33$ & $58.25 \pm 0.44$ & $75.55 \pm 0.43$ & $66.52 \pm 0.37$ \\ 
\hline
\multirow{8}{*}{CIFAR-10}
 & ANE    & $\mathbf{75.29 \pm 1.01}$ & $\mathbf{73.82 \pm 0.40}$ & $\mathbf{72.48 \pm 0.57}$ & $\mathbf{70.89 \pm 0.75}$ & $\mathbf{67.72 \pm 1.11}$ & $\mathbf{67.88 \pm 1.48}$ & $\mathbf{61.97 \pm 1.01}$ \\ 
 & CL     & $\mathbf{74.90 \pm 0.14}$ & $\mathbf{74.13 \pm 0.90}$ & $\mathbf{72.72 \pm 0.73}$ & $\mathbf{71.36 \pm 0.83}$ & $\mathbf{67.97 \pm 1.51}$ & $\mathbf{68.46 \pm 1.07}$ & $\mathbf{62.22 \pm 0.97}$ \\ 
 & Twoing & $72.20 \pm 0.56$ & $67.80 \pm 1.13$ & $63.81 \pm 0.65$ & $59.44 \pm 0.88$ & $55.08 \pm 1.07$ & $66.03 \pm 0.64$ & $\mathbf{61.67 \pm 0.71}$ \\ 
 & Credal & $\mathbf{74.96 \pm 0.72}$ & $70.01 \pm 0.50$ & $65.25 \pm 0.72$ & $60.01 \pm 0.90$ & $55.12 \pm 1.94$ & $\mathbf{67.26 \pm 1.26}$ & $\mathbf{61.97 \pm 1.02}$ \\ 
 & GCE    & $73.20 \pm 0.37$ & $68.10 \pm 0.99$ & $63.59 \pm 0.86$ & $59.55 \pm 1.11$ & $55.04 \pm 0.87$ & $65.79 \pm 0.66$ & $\mathbf{61.27 \pm 0.65}$ \\ 
 & MSE    & $\mathbf{74.40 \pm 0.32}$ & $69.66 \pm 0.83$ & $64.94 \pm 0.77$ & $59.87 \pm 0.62$ & $55.26 \pm 1.13$ & $66.63 \pm 0.60$ & $\mathbf{61.85 \pm 1.04}$ \\ 
 & CE     & $\mathbf{75.02 \pm 0.36}$ & $70.09 \pm 0.88$ & $65.12 \pm 0.88$ & $59.99 \pm 0.58$ & $55.26 \pm 0.54$ & $\mathbf{67.08 \pm 0.92}$ & $\mathbf{61.96 \pm 0.85}$ \\ 
 & Ranking & $73.84 \pm 0.45$ & $69.06 \pm 1.64$ & $64.10 \pm 0.74$ & $59.79 \pm 1.19$ & $55.12 \pm 1.08$ & $66.72 \pm 0.94$ & $\mathbf{61.07 \pm 0.82}$ \\ 
\hline
\end{tabular} 
\caption{Mean test accuracy (\%) $\pm$ 2 standard deviations for DT on binary classification problems. Methods giving the largest mean accuracy and those with overlapping 97.5\% confidence interval are highlighted for each dataset and noise setting.}  
\label{tab:dt-bin}
\end{table*}

\begin{table*}[t] 
	\scriptsize 
	\centering 
	\begin{tabular}{clccccccc} 
&& \multicolumn{5}{c}{Uniform Noise} & Class Conditional \\ \cline{3-8} 
Dataset & Split Crit. & $0.0$ & $0.1$ & $0.2$ & $0.3$ & $0.4$ & Class Conditional  \\ \hline 
\multirow{7}{*}{MNIST}
 & ANE    & $86.74 \pm 0.36$ & $\mathbf{84.58 \pm 1.12}$ & $\mathbf{82.34 \pm 0.57}$ & $\mathbf{80.07 \pm 0.64}$ & $76.21 \pm 1.10$ & $74.75 \pm 1.03$ \\ 
 & CL     & $87.24 \pm 0.23$ & $\mathbf{85.33 \pm 0.58}$ & $\mathbf{83.13 \pm 0.59}$ & $\mathbf{80.53 \pm 0.67}$ & $\mathbf{77.70 \pm 0.70}$ & $\mathbf{76.65 \pm 1.17}$ \\ 
 & Twoing & $79.24 \pm 0.57$ & $72.62 \pm 0.72$ & $65.49 \pm 1.37$ & $57.83 \pm 1.28$ & $49.92 \pm 0.41$ & $50.69 \pm 0.47$ \\ 
 & Credal & $\mathbf{88.61 \pm 0.31}$ & $80.14 \pm 0.44$ & $70.41 \pm 0.96$ & $61.44 \pm 0.28$ & $52.46 \pm 1.60$ & $53.52 \pm 1.00$ \\ 
 & GCE    & $86.79 \pm 0.38$ & $75.74 \pm 0.95$ & $66.41 \pm 1.48$ & $57.22 \pm 0.47$ & $49.36 \pm 2.22$ & $50.87 \pm 0.79$ \\ 
 & MSE    & $87.89 \pm 0.32$ & $77.42 \pm 0.42$ & $67.93 \pm 0.21$ & $59.06 \pm 0.90$ & $50.81 \pm 1.49$ & $52.01 \pm 0.76$ \\ 
 & CE     & $\mathbf{88.60 \pm 0.23}$ & $77.57 \pm 1.04$ & $67.97 \pm 1.21$ & $59.32 \pm 1.05$ & $49.87 \pm 0.60$ & $51.98 \pm 0.90$ \\ 
\hline
\multirow{7}{*}{20News}
 & ANE    & $\mathbf{32.90 \pm 1.26}$ & $\mathbf{29.40 \pm 1.47}$ & $\mathbf{26.89 \pm 2.01}$ & $\mathbf{24.32 \pm 1.61}$ & $\mathbf{21.20 \pm 1.46}$ & $\mathbf{22.07 \pm 1.40}$ \\ 
 & CL     & $31.01 \pm 0.84$ & $28.64 \pm 0.48$ & $\mathbf{26.75 \pm 1.61}$ & $\mathbf{23.97 \pm 1.34}$ & $\mathbf{21.13 \pm 2.20}$ & $\mathbf{22.92 \pm 0.72}$ \\ 
 & Twoing & $25.56 \pm 0.60$ & $22.73 \pm 0.98$ & $21.95 \pm 0.73$ & $18.92 \pm 1.41$ & $17.23 \pm 0.78$ & $18.58 \pm 1.04$ \\ 
 & Credal & $33.11 \pm 0.31$ & $\mathbf{30.41 \pm 0.71}$ & $\mathbf{27.67 \pm 1.08}$ & $\mathbf{23.83 \pm 1.31}$ & $\mathbf{20.92 \pm 1.18}$ & $\mathbf{22.39 \pm 0.95}$ \\ 
 & GCE    & $31.72 \pm 0.32$ & $28.45 \pm 0.86$ & $\mathbf{25.74 \pm 1.42}$ & $\mathbf{23.11 \pm 1.10}$ & $\mathbf{20.33 \pm 0.67}$ & $\mathbf{21.85 \pm 0.73}$ \\ 
 & MSE    & $\mathbf{33.91 \pm 0.32}$ & $\mathbf{30.27 \pm 1.15}$ & $\mathbf{27.45 \pm 0.78}$ & $\mathbf{24.14 \pm 1.12}$ & $\mathbf{21.28 \pm 0.60}$ & $\mathbf{22.50 \pm 1.27}$ \\ 
 & CE     & $\mathbf{33.86 \pm 0.61}$ & $\mathbf{30.65 \pm 0.80}$ & $\mathbf{27.05 \pm 0.83}$ & $\mathbf{24.03 \pm 1.23}$ & $\mathbf{20.90 \pm 0.68}$ & $\mathbf{21.92 \pm 0.99}$ \\ 
\hline
\multirow{7}{*}{UNSW-NB15}
 & ANE    & $\mathbf{73.78 \pm 0.83}$ & $\mathbf{74.18 \pm 1.99}$ & $\mathbf{73.79 \pm 1.46}$ & $\mathbf{72.98 \pm 2.90}$ & $\mathbf{73.25 \pm 1.80}$ & $\mathbf{74.04 \pm 1.47}$ \\ 
 & CL     & $\mathbf{73.69 \pm 1.16}$ & $\mathbf{74.25 \pm 1.46}$ & $\mathbf{73.60 \pm 1.66}$ & $\mathbf{73.52 \pm 2.09}$ & $\mathbf{72.90 \pm 2.19}$ & $\mathbf{74.21 \pm 0.82}$ \\ 
 & Twoing & $70.34 \pm 4.79$ & $66.31 \pm 1.23$ & $60.79 \pm 3.60$ & $55.78 \pm 1.39$ & $49.54 \pm 2.56$ & $52.90 \pm 3.72$ \\ 
 & Credal & $\mathbf{74.49 \pm 0.11}$ & $71.67 \pm 0.86$ & $66.91 \pm 2.06$ & $61.02 \pm 0.50$ & $55.73 \pm 2.28$ & $60.43 \pm 0.59$ \\ 
 & GCE    & $71.35 \pm 0.43$ & $65.16 \pm 2.44$ & $58.46 \pm 1.98$ & $53.05 \pm 2.71$ & $46.24 \pm 1.54$ & $52.53 \pm 0.80$ \\ 
 & MSE    & $73.67 \pm 0.07$ & $66.29 \pm 1.02$ & $60.48 \pm 1.26$ & $54.79 \pm 1.62$ & $48.52 \pm 2.00$ & $54.01 \pm 1.35$ \\ 
 & CE     & $73.00 \pm 0.13$ & $66.61 \pm 1.59$ & $60.39 \pm 2.29$ & $54.55 \pm 2.34$ & $49.20 \pm 2.67$ & $54.65 \pm 0.65$ \\ 
\hline
\multirow{7}{*}{Covertype}
 & ANE    & $93.60 \pm 0.16$ & $87.03 \pm 0.52$ & $\mathbf{82.59 \pm 0.22}$ & $\mathbf{75.50 \pm 0.97}$ & $\mathbf{75.70 \pm 0.40}$ & $\mathbf{72.82 \pm 0.53}$ \\ 
 & CL     & $72.41 \pm 0.01$ & $72.50 \pm 0.32$ & $72.44 \pm 0.32$ & $72.45 \pm 0.28$ & $72.01 \pm 0.77$ & $\mathbf{72.21 \pm 0.28}$ \\ 
 & Twoing & $81.55 \pm 0.16$ & $73.92 \pm 0.35$ & $66.41 \pm 0.30$ & $59.00 \pm 0.54$ & $51.06 \pm 0.44$ & $47.33 \pm 0.40$ \\ 
 & Credal & $94.12 \pm 0.05$ & $\mathbf{89.69 \pm 0.13}$ & $\mathbf{82.62 \pm 0.51}$ & $73.96 \pm 0.29$ & $63.85 \pm 0.46$ & $57.81 \pm 0.59$ \\ 
 & GCE    & $92.77 \pm 0.06$ & $81.56 \pm 0.33$ & $71.42 \pm 0.12$ & $61.76 \pm 0.40$ & $52.54 \pm 0.22$ & $48.20 \pm 0.21$ \\ 
 & MSE    & $94.00 \pm 0.08$ & $83.07 \pm 0.20$ & $73.02 \pm 0.14$ & $63.53 \pm 0.25$ & $54.35 \pm 0.37$ & $49.37 \pm 0.63$ \\ 
 & CE     & $\mathbf{94.42 \pm 0.03}$ & $83.49 \pm 0.33$ & $73.25 \pm 0.25$ & $63.62 \pm 0.37$ & $54.19 \pm 0.64$ & $49.25 \pm 0.45$ \\ 
\hline
\multirow{7}{*}{CIFAR-10}
 & ANE    & $\mathbf{26.78 \pm 0.82}$ & $\mathbf{24.68 \pm 0.66}$ & $\mathbf{23.01 \pm 1.03}$ & $\mathbf{21.72 \pm 2.08}$ & $\mathbf{20.24 \pm 2.10}$ & $\mathbf{21.52 \pm 0.28}$ \\ 
 & CL     & $26.10 \pm 0.22$ & $\mathbf{25.62 \pm 0.99}$ & $\mathbf{24.02 \pm 0.74}$ & $\mathbf{22.59 \pm 0.95}$ & $\mathbf{21.03 \pm 0.37}$ & $\mathbf{21.99 \pm 0.57}$ \\ 
 & Twoing & $22.28 \pm 1.22$ & $21.66 \pm 0.38$ & $20.11 \pm 0.90$ & $18.69 \pm 0.54$ & $17.25 \pm 0.27$ & $18.42 \pm 0.63$ \\ 
 & Credal & $\mathbf{26.64 \pm 0.53}$ & $24.31 \pm 0.60$ & $22.99 \pm 0.55$ & $20.73 \pm 0.93$ & $19.28 \pm 0.36$ & $20.33 \pm 0.66$ \\ 
 & GCE    & $25.81 \pm 0.37$ & $23.82 \pm 0.71$ & $22.19 \pm 0.89$ & $20.48 \pm 0.78$ & $19.21 \pm 0.58$ & $20.20 \pm 0.81$ \\ 
 & MSE    & $\mathbf{26.84 \pm 0.39}$ & $\mathbf{24.95 \pm 1.19}$ & $\mathbf{22.97 \pm 0.74}$ & $20.90 \pm 0.84$ & $19.18 \pm 1.07$ & $20.77 \pm 0.58$ \\ 
 & CE     & $26.27 \pm 0.25$ & $\mathbf{24.57 \pm 0.43}$ & $22.32 \pm 1.00$ & $20.83 \pm 0.56$ & $19.00 \pm 0.81$ & $20.06 \pm 0.69$ \\ 
\hline
\end{tabular} 
\caption{Mean test accuracy (\%) $\pm$ 2 standard deviations for DT on multiclass classification problems. Methods giving the largest mean accuracy and those with overlapping 97.5\% confidence interval are highlighted for each dataset and noise setting.} 
\label{tab:dt-mc}
\end{table*}

\begin{table*}[t] 
	\scriptsize 
	\centering 
	\begin{tabular}{clccccccc} 
&& \multicolumn{5}{c}{Uniform Noise} & \multicolumn{2}{c}{Class Conditional} \\ \cline{3-9} 
Dataset & Split Crit. & $0.0$ & $0.1$ & $0.2$ & $0.3$ & $0.4$ & $(0.1,0.3)$ & $(0.2,0.4)$  \\ \hline 
\multirow{8}{*}{MNIST}
 & ANE    & $98.04 \pm 0.12$ & $97.89 \pm 0.10$ & $97.54 \pm 0.15$ & $95.63 \pm 0.43$ & $\mathbf{88.67 \pm 1.58}$ & $\mathbf{95.91 \pm 0.56}$ & $\mathbf{88.96 \pm 1.06}$ \\ 
 & CL     & $91.83 \pm 0.85$ & $91.82 \pm 0.76$ & $91.76 \pm 0.64$ & $91.53 \pm 0.52$ & $\mathbf{89.13 \pm 1.29}$ & $85.40 \pm 0.75$ & $78.53 \pm 2.12$ \\ 
 & Twoing & $97.62 \pm 0.20$ & $97.61 \pm 0.21$ & $97.46 \pm 0.14$ & $\mathbf{96.42 \pm 0.37}$ & $\mathbf{87.72 \pm 1.23}$ & $\mathbf{95.64 \pm 0.33}$ & $\mathbf{88.46 \pm 0.83}$ \\ 
 & Credal & $98.10 \pm 0.06$ & $\mathbf{98.03 \pm 0.13}$ & $\mathbf{97.70 \pm 0.17}$ & $95.74 \pm 0.45$ & $84.72 \pm 0.82$ & $\mathbf{96.20 \pm 0.69}$ & $\mathbf{89.74 \pm 0.98}$ \\ 
 & GCE    & $\mathbf{98.06 \pm 0.14}$ & $\mathbf{98.00 \pm 0.10}$ & $97.03 \pm 0.29$ & $93.85 \pm 0.29$ & $82.17 \pm 0.62$ & $94.75 \pm 0.29$ & $87.65 \pm 1.36$ \\ 
 & MSE    & $\mathbf{98.16 \pm 0.11}$ & $\mathbf{98.14 \pm 0.17}$ & $\mathbf{97.73 \pm 0.16}$ & $\mathbf{95.79 \pm 0.54}$ & $84.76 \pm 1.44$ & $\mathbf{95.98 \pm 0.38}$ & $\mathbf{88.69 \pm 1.55}$ \\ 
 & CE     & $\mathbf{98.18 \pm 0.03}$ & $\mathbf{98.10 \pm 0.07}$ & $\mathbf{97.83 \pm 0.13}$ & $\mathbf{96.15 \pm 0.22}$ & $85.69 \pm 0.99$ & $\mathbf{96.14 \pm 0.60}$ & $\mathbf{88.24 \pm 1.31}$ \\ 
 & Ranking & $98.05 \pm 0.14$ & $\mathbf{97.89 \pm 0.17}$ & $\mathbf{97.61 \pm 0.17}$ & $\mathbf{95.86 \pm 0.50}$ & $85.08 \pm 1.50$ & $\mathbf{95.81 \pm 0.62}$ & $\mathbf{88.37 \pm 0.87}$ \\ 
\hline
\multirow{8}{*}{20News}
 & ANE    & $\mathbf{86.05 \pm 0.63}$ & $\mathbf{85.88 \pm 0.34}$ & $\mathbf{85.09 \pm 0.60}$ & $\mathbf{82.05 \pm 1.76}$ & $\mathbf{73.84 \pm 3.15}$ & $\mathbf{80.68 \pm 0.75}$ & $\mathbf{70.82 \pm 2.80}$ \\ 
 & CL     & $81.70 \pm 0.32$ & $81.55 \pm 0.31$ & $80.18 \pm 1.04$ & $77.63 \pm 1.03$ & $71.01 \pm 2.22$ & $76.55 \pm 1.95$ & $65.60 \pm 3.51$ \\ 
 & Twoing & $84.88 \pm 0.05$ & $84.54 \pm 0.54$ & $83.99 \pm 0.25$ & $\mathbf{82.16 \pm 0.49}$ & $\mathbf{73.52 \pm 1.15}$ & $\mathbf{79.41 \pm 1.30}$ & $\mathbf{71.49 \pm 1.34}$ \\ 
 & Credal & $\mathbf{85.96 \pm 0.22}$ & $\mathbf{85.72 \pm 0.39}$ & $\mathbf{84.72 \pm 0.46}$ & $\mathbf{82.53 \pm 0.67}$ & $\mathbf{73.72 \pm 2.41}$ & $\mathbf{80.86 \pm 1.49}$ & $\mathbf{73.47 \pm 1.50}$ \\ 
 & GCE    & $85.04 \pm 0.25$ & $83.72 \pm 0.52$ & $81.16 \pm 1.07$ & $77.11 \pm 0.70$ & $69.44 \pm 1.59$ & $78.05 \pm 1.60$ & $70.79 \pm 1.60$ \\ 
 & MSE    & $\mathbf{86.06 \pm 0.39}$ & $\mathbf{85.78 \pm 0.30}$ & $\mathbf{84.60 \pm 0.48}$ & $\mathbf{82.29 \pm 1.27}$ & $\mathbf{73.09 \pm 1.16}$ & $\mathbf{80.57 \pm 1.50}$ & $\mathbf{72.57 \pm 2.85}$ \\ 
 & CE     & $\mathbf{86.25 \pm 0.63}$ & $\mathbf{85.88 \pm 0.42}$ & $\mathbf{85.07 \pm 0.93}$ & $\mathbf{83.10 \pm 1.01}$ & $\mathbf{74.86 \pm 2.28}$ & $\mathbf{80.89 \pm 1.16}$ & $\mathbf{72.06 \pm 3.21}$ \\ 
 & Ranking & $\mathbf{86.15 \pm 0.16}$ & $\mathbf{85.93 \pm 0.19}$ & $\mathbf{85.10 \pm 0.49}$ & $\mathbf{82.20 \pm 0.62}$ & $\mathbf{74.15 \pm 3.47}$ & $\mathbf{80.01 \pm 1.57}$ & $70.72 \pm 0.96$ \\ 
\hline
\multirow{8}{*}{UNSW-NB15}
 & ANE    & $\mathbf{87.32 \pm 0.09}$ & $\mathbf{86.95 \pm 0.25}$ & $82.57 \pm 0.90$ & $\mathbf{86.44 \pm 0.45}$ & $\mathbf{80.02 \pm 1.87}$ & $\mathbf{89.64 \pm 0.51}$ & $\mathbf{89.29 \pm 1.96}$ \\ 
 & CL     & $80.31 \pm 1.10$ & $80.21 \pm 1.06$ & $80.59 \pm 1.21$ & $80.69 \pm 0.80$ & $\mathbf{79.81 \pm 1.58}$ & $85.14 \pm 3.25$ & $\mathbf{89.79 \pm 1.05}$ \\ 
 & Twoing & $\mathbf{87.33 \pm 0.07}$ & $\mathbf{86.65 \pm 0.22}$ & $85.64 \pm 0.46$ & $82.83 \pm 0.92$ & $72.60 \pm 1.63$ & $86.46 \pm 0.78$ & $79.65 \pm 2.24$ \\ 
 & Credal & $86.98 \pm 0.10$ & $\mathbf{87.01 \pm 0.25}$ & $\mathbf{86.47 \pm 0.58}$ & $83.46 \pm 0.72$ & $72.68 \pm 1.92$ & $87.90 \pm 0.67$ & $81.21 \pm 2.08$ \\ 
 & GCE    & $86.77 \pm 0.10$ & $85.92 \pm 0.26$ & $83.90 \pm 0.57$ & $79.16 \pm 1.71$ & $68.86 \pm 2.48$ & $83.91 \pm 1.09$ & $77.43 \pm 3.33$ \\ 
 & MSE    & $87.09 \pm 0.12$ & $86.36 \pm 0.37$ & $84.93 \pm 0.37$ & $80.97 \pm 1.01$ & $70.58 \pm 2.59$ & $85.57 \pm 1.46$ & $78.00 \pm 3.15$ \\ 
 & CE     & $\mathbf{87.29 \pm 0.05}$ & $86.54 \pm 0.24$ & $85.21 \pm 0.55$ & $81.42 \pm 1.20$ & $71.12 \pm 3.50$ & $85.97 \pm 1.29$ & $79.38 \pm 1.99$ \\ 
 & Ranking & $87.10 \pm 0.09$ & $86.27 \pm 0.22$ & $85.06 \pm 0.77$ & $81.71 \pm 0.95$ & $71.32 \pm 1.72$ & $85.95 \pm 0.31$ & $80.50 \pm 1.82$ \\ 
\hline
\multirow{8}{*}{Mushrooms}
 & ANE    & $\mathbf{100.00 \pm 0.00}$ & $\mathbf{99.79 \pm 0.24}$ & $\mathbf{99.54 \pm 0.73}$ & $\mathbf{99.29 \pm 0.76}$ & $\mathbf{98.18 \pm 1.15}$ & $\mathbf{99.16 \pm 1.78}$ & $\mathbf{93.70 \pm 6.20}$ \\ 
 & CL     & $99.27 \pm 0.22$ & $\mathbf{99.46 \pm 0.59}$ & $\mathbf{99.31 \pm 0.25}$ & $\mathbf{99.08 \pm 0.42}$ & $\mathbf{97.77 \pm 0.76}$ & $\mathbf{99.77 \pm 0.78}$ & $\mathbf{95.32 \pm 3.89}$ \\ 
 & Twoing & $\mathbf{100.00 \pm 0.00}$ & $98.65 \pm 0.87$ & $93.33 \pm 2.33$ & $83.50 \pm 2.21$ & $69.91 \pm 3.24$ & $90.95 \pm 1.51$ & $79.22 \pm 3.10$ \\ 
 & Credal & $\mathbf{100.00 \pm 0.00}$ & $\mathbf{99.93 \pm 0.27}$ & $\mathbf{99.36 \pm 0.57}$ & $93.14 \pm 2.10$ & $74.20 \pm 3.63$ & $96.71 \pm 0.92$ & $86.17 \pm 3.06$ \\ 
 & GCE    & $\mathbf{100.00 \pm 0.00}$ & $93.53 \pm 1.31$ & $84.69 \pm 1.83$ & $74.20 \pm 3.19$ & $62.57 \pm 2.90$ & $83.58 \pm 2.14$ & $73.75 \pm 2.04$ \\ 
 & MSE    & $\mathbf{100.00 \pm 0.00}$ & $94.09 \pm 0.73$ & $85.98 \pm 1.59$ & $75.88 \pm 1.76$ & $62.86 \pm 0.89$ & $84.89 \pm 1.06$ & $73.90 \pm 2.27$ \\ 
 & CE     & $\mathbf{100.00 \pm 0.00}$ & $94.25 \pm 0.77$ & $85.35 \pm 2.48$ & $76.14 \pm 2.97$ & $63.73 \pm 1.86$ & $84.47 \pm 1.81$ & $74.98 \pm 1.89$ \\ 
 & Ranking & $\mathbf{100.00 \pm 0.00}$ & $94.41 \pm 1.52$ & $85.69 \pm 1.16$ & $75.14 \pm 0.80$ & $63.67 \pm 1.91$ & $84.05 \pm 2.57$ & $73.70 \pm 2.14$ \\ 
\hline
\multirow{8}{*}{Covertype}
 & ANE    & $97.69 \pm 0.04$ & $95.47 \pm 0.10$ & $91.23 \pm 0.13$ & $\mathbf{89.33 \pm 0.15}$ & $\mathbf{78.79 \pm 0.47}$ & $86.74 \pm 0.20$ & $\mathbf{79.02 \pm 2.13}$ \\ 
 & CL     & $79.03 \pm 0.99$ & $78.96 \pm 1.16$ & $79.17 \pm 1.24$ & $79.54 \pm 1.03$ & $\mathbf{78.84 \pm 0.82}$ & $74.39 \pm 0.84$ & $63.53 \pm 1.56$ \\ 
 & Twoing & $93.24 \pm 0.07$ & $92.61 \pm 0.22$ & $90.40 \pm 0.17$ & $84.57 \pm 0.16$ & $71.59 \pm 0.17$ & $86.30 \pm 0.19$ & $\mathbf{78.25 \pm 0.29}$ \\ 
 & Credal & $97.75 \pm 0.03$ & $\mathbf{96.38 \pm 0.08}$ & $\mathbf{91.51 \pm 0.21}$ & $81.69 \pm 0.39$ & $67.51 \pm 0.38$ & $\mathbf{88.33 \pm 0.15}$ & $\mathbf{78.72 \pm 0.27}$ \\ 
 & GCE    & $97.58 \pm 0.04$ & $94.94 \pm 0.23$ & $89.07 \pm 0.32$ & $79.38 \pm 0.19$ & $66.21 \pm 0.17$ & $86.12 \pm 0.29$ & $76.67 \pm 0.25$ \\ 
 & MSE    & $97.79 \pm 0.02$ & $95.05 \pm 0.12$ & $89.20 \pm 0.25$ & $79.56 \pm 0.17$ & $66.18 \pm 0.38$ & $86.39 \pm 0.17$ & $76.79 \pm 0.27$ \\ 
 & CE     & $\mathbf{97.87 \pm 0.04}$ & $95.27 \pm 0.11$ & $89.47 \pm 0.17$ & $80.13 \pm 0.24$ & $66.77 \pm 0.37$ & $86.71 \pm 0.35$ & $\mathbf{77.24 \pm 0.22}$ \\ 
 & Ranking & $\mathbf{97.90 \pm 0.04}$ & $95.34 \pm 0.14$ & $89.61 \pm 0.21$ & $80.15 \pm 0.35$ & $66.55 \pm 0.15$ & $86.76 \pm 0.07$ & $\mathbf{77.01 \pm 0.44}$ \\ 
\hline
\multirow{8}{*}{CIFAR-10}
 & ANE    & $85.28 \pm 0.28$ & $\mathbf{85.23 \pm 0.26}$ & $\mathbf{84.68 \pm 0.46}$ & $\mathbf{83.05 \pm 0.68}$ & $\mathbf{76.07 \pm 1.54}$ & $\mathbf{80.11 \pm 0.79}$ & $\mathbf{75.12 \pm 1.08}$ \\ 
 & CL     & $79.04 \pm 0.23$ & $79.10 \pm 0.26$ & $78.78 \pm 0.92$ & $77.30 \pm 0.29$ & $72.07 \pm 0.84$ & $67.86 \pm 1.14$ & $60.60 \pm 0.46$ \\ 
 & Twoing & $83.24 \pm 0.20$ & $82.91 \pm 0.19$ & $82.82 \pm 0.42$ & $81.81 \pm 0.67$ & $\mathbf{75.87 \pm 0.80}$ & $77.32 \pm 0.40$ & $72.79 \pm 1.15$ \\ 
 & Credal & $\mathbf{85.62 \pm 0.10}$ & $\mathbf{85.49 \pm 0.33}$ & $\mathbf{84.82 \pm 0.51}$ & $\mathbf{82.87 \pm 0.73}$ & $\mathbf{75.31 \pm 1.03}$ & $\mathbf{80.77 \pm 0.38}$ & $\mathbf{76.54 \pm 0.88}$ \\ 
 & GCE    & $85.06 \pm 0.17$ & $83.79 \pm 0.62$ & $81.10 \pm 0.67$ & $77.00 \pm 0.84$ & $70.62 \pm 1.18$ & $78.49 \pm 0.48$ & $73.99 \pm 1.33$ \\ 
 & MSE    & $\mathbf{85.69 \pm 0.21}$ & $\mathbf{85.40 \pm 0.43}$ & $\mathbf{84.49 \pm 0.35}$ & $82.61 \pm 0.59$ & $\mathbf{75.10 \pm 0.44}$ & $\mathbf{80.59 \pm 0.71}$ & $\mathbf{75.82 \pm 0.72}$ \\ 
 & CE     & $\mathbf{85.57 \pm 0.28}$ & $\mathbf{85.50 \pm 0.44}$ & $\mathbf{84.97 \pm 0.70}$ & $\mathbf{83.34 \pm 0.31}$ & $\mathbf{76.39 \pm 1.26}$ & $\mathbf{80.27 \pm 0.70}$ & $\mathbf{75.84 \pm 0.50}$ \\ 
 & Ranking & $84.77 \pm 0.16$ & $84.66 \pm 0.28$ & $83.98 \pm 0.55$ & $\mathbf{82.49 \pm 0.83}$ & $\mathbf{75.63 \pm 0.67}$ & $79.40 \pm 0.59$ & $74.84 \pm 0.90$ \\ 
\hline
\end{tabular} 
\caption{Mean test accuracy (\%) $\pm$ 2 standard deviations for RF on binary classification problems. Methods giving the largest mean accuracy and those with overlapping 97.5\% confidence interval are highlighted for each dataset and noise setting.} 
\label{tab:rf-bin}
\end{table*}

\begin{table*}[t] 
	\scriptsize 
	\centering 
	\begin{tabular}{clccccccc} 
&& \multicolumn{5}{c}{Uniform Noise} & Class Conditional \\ \cline{3-8} 
Dataset & Split Crit. & $0.0$ & $0.1$ & $0.2$ & $0.3$ & $0.4$ & Class Conditional  \\ \hline 
\multirow{7}{*}{MNIST}
 & ANE    & $97.06 \pm 0.24$ & $96.95 \pm 0.20$ & $96.82 \pm 0.17$ & $96.70 \pm 0.28$ & $96.38 \pm 0.17$ & $95.93 \pm 0.28$ \\ 
 & CL     & $96.35 \pm 0.15$ & $96.41 \pm 0.21$ & $96.39 \pm 0.10$ & $96.24 \pm 0.11$ & $96.06 \pm 0.10$ & $95.44 \pm 0.50$ \\ 
 & Twoing & $96.42 \pm 0.20$ & $96.41 \pm 0.19$ & $96.27 \pm 0.15$ & $96.18 \pm 0.21$ & $95.85 \pm 0.06$ & $95.41 \pm 0.23$ \\ 
 & Credal & $\mathbf{97.21 \pm 0.13}$ & $\mathbf{97.15 \pm 0.11}$ & $\mathbf{97.12 \pm 0.36}$ & $96.94 \pm 0.05$ & $\mathbf{96.70 \pm 0.24}$ & $\mathbf{96.30 \pm 0.20}$ \\ 
 & GCE    & $\mathbf{97.29 \pm 0.18}$ & $\mathbf{97.23 \pm 0.13}$ & $\mathbf{97.13 \pm 0.19}$ & $96.80 \pm 0.16$ & $95.94 \pm 0.41$ & $95.39 \pm 0.19$ \\ 
 & MSE    & $\mathbf{97.40 \pm 0.16}$ & $\mathbf{97.36 \pm 0.16}$ & $\mathbf{97.18 \pm 0.22}$ & $\mathbf{97.07 \pm 0.12}$ & $\mathbf{96.66 \pm 0.16}$ & $\mathbf{96.20 \pm 0.32}$ \\ 
 & CE     & $\mathbf{97.37 \pm 0.17}$ & $\mathbf{97.27 \pm 0.09}$ & $\mathbf{97.13 \pm 0.19}$ & $96.87 \pm 0.10$ & $\mathbf{96.51 \pm 0.24}$ & $\mathbf{96.06 \pm 0.15}$ \\ 
\hline
\multirow{7}{*}{20News}
 & ANE    & $\mathbf{63.15 \pm 0.51}$ & $\mathbf{62.33 \pm 0.62}$ & $\mathbf{61.42 \pm 1.07}$ & $\mathbf{59.93 \pm 0.89}$ & $\mathbf{58.12 \pm 0.51}$ & $\mathbf{56.49 \pm 1.05}$ \\ 
 & CL     & $\mathbf{62.79 \pm 0.56}$ & $61.70 \pm 0.48$ & $60.62 \pm 1.01$ & $58.32 \pm 0.66$ & $55.98 \pm 0.78$ & $54.53 \pm 1.30$ \\ 
 & Twoing & $58.01 \pm 0.65$ & $57.13 \pm 0.71$ & $55.81 \pm 0.57$ & $54.32 \pm 0.76$ & $52.26 \pm 0.67$ & $51.27 \pm 0.96$ \\ 
 & Credal & $\mathbf{63.28 \pm 0.49}$ & $\mathbf{62.95 \pm 1.12}$ & $\mathbf{62.00 \pm 0.51}$ & $\mathbf{60.80 \pm 0.31}$ & $\mathbf{58.65 \pm 0.62}$ & $\mathbf{57.59 \pm 0.83}$ \\ 
 & GCE    & $\mathbf{63.17 \pm 0.71}$ & $61.82 \pm 0.64$ & $60.59 \pm 0.20$ & $58.48 \pm 0.96$ & $55.97 \pm 1.05$ & $54.95 \pm 0.85$ \\ 
 & MSE    & $\mathbf{63.69 \pm 0.73}$ & $\mathbf{63.02 \pm 0.59}$ & $\mathbf{61.70 \pm 0.39}$ & $\mathbf{60.69 \pm 0.56}$ & $\mathbf{58.23 \pm 0.88}$ & $\mathbf{57.25 \pm 0.70}$ \\ 
 & CE     & $\mathbf{63.36 \pm 0.61}$ & $\mathbf{62.36 \pm 0.99}$ & $61.04 \pm 0.67$ & $59.69 \pm 0.73$ & $57.58 \pm 0.68$ & $55.98 \pm 0.60$ \\ 
\hline
\multirow{7}{*}{UNSW-NB15}
 & ANE    & $75.50 \pm 0.13$ & $\mathbf{75.70 \pm 0.13}$ & $75.09 \pm 0.32$ & $\mathbf{75.08 \pm 0.45}$ & $\mathbf{74.88 \pm 0.74}$ & $\mathbf{75.49 \pm 0.57}$ \\ 
 & CL     & $75.58 \pm 0.54$ & $\mathbf{75.45 \pm 0.51}$ & $\mathbf{75.44 \pm 0.41}$ & $\mathbf{75.52 \pm 0.15}$ & $\mathbf{75.00 \pm 0.99}$ & $\mathbf{75.71 \pm 0.56}$ \\ 
 & Twoing & $74.70 \pm 0.14$ & $74.38 \pm 0.17$ & $73.85 \pm 0.22$ & $72.88 \pm 0.39$ & $71.32 \pm 0.27$ & $71.82 \pm 0.29$ \\ 
 & Credal & $\mathbf{76.85 \pm 0.11}$ & $\mathbf{75.90 \pm 0.20}$ & $\mathbf{75.55 \pm 0.26}$ & $\mathbf{75.02 \pm 0.72}$ & $\mathbf{74.33 \pm 0.71}$ & $\mathbf{74.86 \pm 0.70}$ \\ 
 & GCE    & $75.36 \pm 0.09$ & $74.63 \pm 0.22$ & $73.73 \pm 0.31$ & $72.52 \pm 0.43$ & $70.08 \pm 0.48$ & $71.52 \pm 0.70$ \\ 
 & MSE    & $75.53 \pm 0.12$ & $74.94 \pm 0.07$ & $74.14 \pm 0.46$ & $73.40 \pm 0.47$ & $71.59 \pm 0.50$ & $72.59 \pm 0.24$ \\ 
 & CE     & $75.75 \pm 0.21$ & $75.28 \pm 0.20$ & $74.61 \pm 0.32$ & $73.75 \pm 0.23$ & $72.11 \pm 0.46$ & $72.93 \pm 0.36$ \\ 
\hline
\multirow{7}{*}{Covertype}
 & ANE    & $97.18 \pm 0.06$ & $95.54 \pm 0.10$ & $95.31 \pm 0.18$ & $\mathbf{93.60 \pm 0.14}$ & $\mathbf{89.23 \pm 0.09}$ & $\mathbf{85.00 \pm 0.38}$ \\ 
 & CL     & $73.94 \pm 0.62$ & $73.80 \pm 0.32$ & $73.80 \pm 0.51$ & $73.75 \pm 0.04$ & $73.83 \pm 0.22$ & $74.13 \pm 0.73$ \\ 
 & Twoing & $90.72 \pm 0.10$ & $90.30 \pm 0.22$ & $89.53 \pm 0.20$ & $87.95 \pm 0.16$ & $84.87 \pm 0.12$ & $78.82 \pm 0.19$ \\ 
 & Credal & $97.07 \pm 0.05$ & $\mathbf{96.78 \pm 0.05}$ & $\mathbf{95.59 \pm 0.07}$ & $92.84 \pm 0.13$ & $87.29 \pm 0.07$ & $80.00 \pm 0.30$ \\ 
 & GCE    & $97.01 \pm 0.04$ & $95.11 \pm 0.08$ & $94.87 \pm 0.04$ & $92.52 \pm 0.06$ & $87.81 \pm 0.24$ & $79.60 \pm 0.23$ \\ 
 & MSE    & $97.29 \pm 0.03$ & $95.40 \pm 0.15$ & $95.37 \pm 0.05$ & $92.88 \pm 0.16$ & $88.12 \pm 0.07$ & $79.87 \pm 0.25$ \\ 
 & CE     & $\mathbf{97.47 \pm 0.02}$ & $95.67 \pm 0.09$ & $\mathbf{95.58 \pm 0.12}$ & $93.19 \pm 0.09$ & $88.41 \pm 0.19$ & $80.19 \pm 0.11$ \\ 
\hline
\multirow{7}{*}{CIFAR-10}
 & ANE    & $47.53 \pm 0.21$ & $\mathbf{46.90 \pm 0.35}$ & $\mathbf{46.17 \pm 0.61}$ & $\mathbf{45.25 \pm 0.34}$ & $\mathbf{43.84 \pm 1.07}$ & $\mathbf{42.76 \pm 0.62}$ \\ 
 & CL     & $44.93 \pm 0.34$ & $43.94 \pm 0.61$ & $43.22 \pm 0.32$ & $42.02 \pm 0.40$ & $39.74 \pm 0.69$ & $38.87 \pm 0.45$ \\ 
 & Twoing & $42.94 \pm 0.73$ & $42.63 \pm 0.80$ & $41.89 \pm 0.71$ & $40.86 \pm 0.25$ & $39.56 \pm 0.51$ & $38.48 \pm 0.80$ \\ 
 & Credal & $\mathbf{48.00 \pm 0.65}$ & $\mathbf{47.46 \pm 0.69}$ & $\mathbf{46.44 \pm 0.38}$ & $\mathbf{45.52 \pm 0.79}$ & $\mathbf{43.76 \pm 0.46}$ & $\mathbf{42.85 \pm 0.57}$ \\ 
 & GCE    & $46.85 \pm 0.31$ & $45.94 \pm 0.45$ & $44.89 \pm 0.47$ & $43.03 \pm 0.51$ & $41.55 \pm 0.75$ & $40.55 \pm 0.71$ \\ 
 & MSE    & $\mathbf{48.31 \pm 0.43}$ & $\mathbf{47.56 \pm 0.65}$ & $\mathbf{47.16 \pm 0.97}$ & $\mathbf{45.93 \pm 0.74}$ & $\mathbf{44.07 \pm 0.61}$ & $\mathbf{43.21 \pm 1.21}$ \\ 
 & CE     & $\mathbf{47.69 \pm 0.49}$ & $\mathbf{47.04 \pm 0.56}$ & $\mathbf{46.31 \pm 0.50}$ & $\mathbf{45.19 \pm 0.82}$ & $\mathbf{43.32 \pm 0.44}$ & $\mathbf{41.88 \pm 0.90}$ \\ 
\hline
\end{tabular} 
\caption{Mean test accuracy (\%) $\pm$ 2 standard deviations for RF on multiclass classification problems. Methods giving the largest mean accuracy and those with overlapping 97.5\% confidence interval are highlighted for each dataset and noise setting.} 
\label{tab:rf-mc}
\end{table*}

\section{Early Stopping in Binary Classification}
\label{app:alternate_early_stopping}

\Cref{fig:threshold_tuning} shows that generally a larger $\lambda$ leads to smaller trees.
This is expected, because a larger $\lambda$ is associated with higher similarity with the 
misclassification impurity, thus the early stopping behavior in \Cref{thm:early_stopping} 
is more likely to occur, resulting in a smaller tree.
However, we also observed that in some cases, a small $\lambda$ leads to the smallest trees.
This is because a small $\lambda$ has an alternative mechanism of encouraging a smaller tree:
it can favor the production of pure child nodes, 
thus reducing the need to split in future.
On the other hand, this behavior is less likely to happen for a larger $\lambda$.

We explain this using \Cref{fig:alternate_early_stopping} to compare how two different splits 
${\color{blue}(L_{1}, R_{1})}$ and ${\color{red}(L_{2}, R_{2})}$
are ranked for $\lambda=0$ and $\lambda=1$, in a binary classification setting.
The split ${\color{blue}(L_{1}, R_{1})}$  creates a pure left child node $L_{1}$ with the proportion of positive
examples in $L_{1}$ being $p_{L_{1}} = 0$, while $p_{R_{1}} = 0.67$.
The split ${\color{red}(L_{2}, R_{2})}$ does not create any pure child node, with
$p_{L_{2}} = 0.2$ and $p_{R_{2}} = 0.8$.
The left subfigure shows that when $\lambda=0$, the split with higher risk reduction is ${\color{blue}(L_{1}, R_{1})}$,
which creates a pure node.
The right subfigure shows that when $\lambda=1$, the split with higher risk reduction is ${\color{red}(L_{2}, R_{2})}$,
which does not create a pure node.

\begin{figure}[h]
    \centering
    \includegraphics[height=3.6cm]{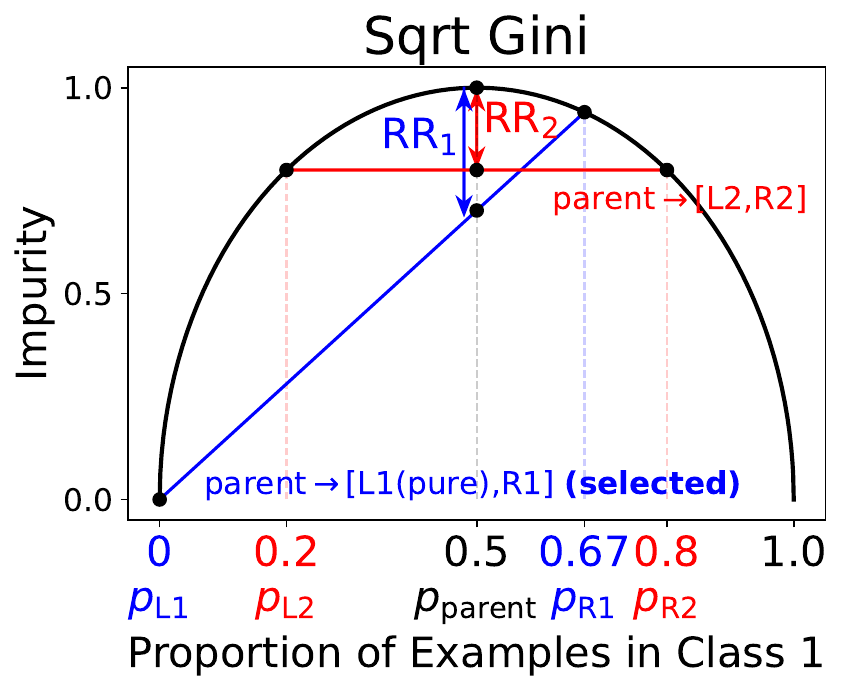}
    \includegraphics[height=3.6cm]{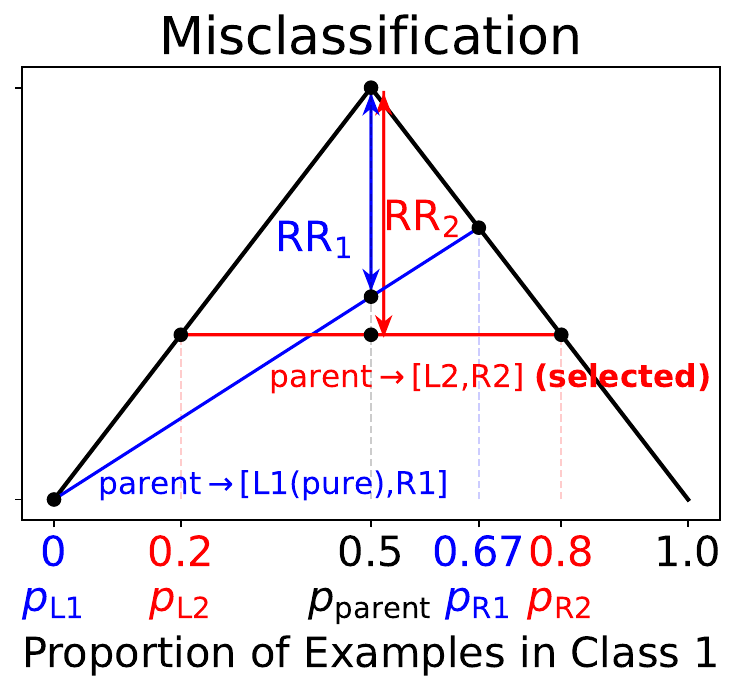}
    \caption{
    Left: NE loss with $\lambda=0$ ranks the split producing a pure child node above the one that does not. 
    Right: NE loss with $\lambda=1$ gives opposite rankings.
    }
    \label{fig:alternate_early_stopping}
\end{figure}

Note that the black curve plots the impurity for a node with a class distribution $(p, 1-p)$ against $p$.
We use the following trick to read the risk reduction from the plot:
consider a node with a class distribution $(p_{\text{parent}}, 1-p_{\text{parent}})$ and 
a split which creates two child nodes with class distributions ${\color{blue}(p_{L}, 1-p_{L})}$ and 
${\color{red}(p_{R}, 1-p_{R})}$.
Let $A$ and $B$ be the two points of the impurity curve at $p_{L}$ and $p_{R}$, then
the risk reduction is the gap between the impurity curve and the line segment $AB$, at $p=p_{\text{parent}}$.

\end{document}